\newtheorem{corollary}{Corollary}
\newtheorem{proof}{Proof}
\newtheorem{lemma}{Lemma}
\title{Corresponding Supine and Prone Colon Visualization Using Eigenfunction Analysis and Fold Modeling}
\author{Saad Nadeem, Joseph Marino, Xianfeng Gu, and Arie Kaufman, \textit{Fellow, IEEE}}
\abstract{We present a method for registration and visualization of corresponding supine and prone virtual colonoscopy scans based on eigenfunction analysis and fold modeling. In virtual colonoscopy, CT scans are acquired with the patient in two positions, and their registration is desirable so that physicians can corroborate findings between scans. Our algorithm performs this registration efficiently through the use of Fiedler vector representation (the second eigenfunction of the Laplace-Beltrami operator). This representation is employed to first perform global registration of the two colon positions. The registration is then locally refined using the haustral folds, which are automatically segmented using the 3D level sets of the Fiedler vector. The use of Fiedler vectors and the segmented folds presents a precise way of visualizing corresponding regions across datasets and visual modalities. We present multiple methods of visualizing the results, including 2D flattened rendering and the corresponding 3D endoluminal views. The precise fold modeling is used to automatically find a suitable cut for the 2D flattening, which provides a less distorted visualization. Our approach is robust, and we demonstrate its efficiency and efficacy by showing matched views on both the 2D flattened colons and in the 3D endoluminal view. We analytically evaluate the results by measuring the distance between features on the registered colons, and we also assess our fold segmentation against 20 manually labeled datasets. We have compared our results analytically to previous methods, and have found our method to achieve superior results. We also prove the hot spots conjecture for modeling cylindrical topology using Fiedler vector representation, which allows our approach to be used for general cylindrical geometry modeling and feature extraction.
} 
\keywords{Medical visualization, colon registration, geometry-based techniques, mathematical foundations for visualization}
\begin{document}


\firstsection{Introduction}

\maketitle


Colorectal cancer (CRC) is the third most frequently diagnosed cancer worldwide, and the fourth leading cause of cancer-related mortality with 700,000 deaths per year worldwide. Optical colonoscopy (OC) is an uncomfortable invasive technique commonly used to screen for CRC. In contrast, virtual colonoscopy (VC) was introduced as a non-invasive procedure for the mass screening of polyps \cite{hong:1997:siggraph}, the precursors to CRC. VC reconstructs a 3D colon model from a computed tomography (CT) scan of a patient's abdomen and then a physician virtually navigates through the colon looking for polyps. The low radiation dosage, the recent assignment of CPT code 74263 for insurance reimbursement, the lack of side effects from anesthesia, and the quick turnaround time for results makes VC a prime method for CRC screening.

\begin{figure*}[t!]
\begin{center}
\includegraphics[width=1\textwidth]{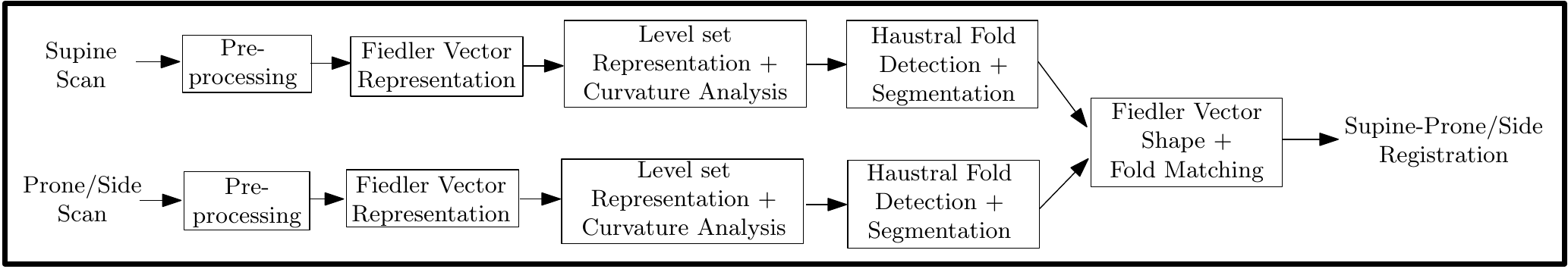}
\end{center}
\vspace{-6.5mm}
\caption{The pipeline for supine-prone/side registration.
\label{fig:pipeline}}
\vspace{-3.6mm}
\end{figure*}

In a typical VC, the patient is scanned twice, supine (facing up) and prone (facing down) or, more recently, in a side position called the lateral recumbent position to avoid artifacts due to the squishing of the colon in the prone position. The two scans improve the sensitivity and specificity of polyp detection and reduce the extent of uninterpretable collapsed or fluid-filled segments. Because the colon changes between the supine and prone (or side) scans, the two colon models need to be registered to match detections. Computer-aided detection (CAD) techniques \cite{hong:2006:tvcg,zhao:2006:tvcg} have been introduced to automatically detect polyps, but they have issues. Registration can aid these CAD techniques by reducing false positives (primarily due to stool and colon folds) when matched across scans. Moreover, the recurrence of cancer due to the incomplete removal of polyps can be tracked via registration of scans captured at different periods. This allows physicians to perform an effective follow-up examination by visualizing the polyp growth over time. Hence, registration provides an effective tool to distinguish between lesions, aid in CAD, and visualize polyp growth over time. In this paper, we focus on registering VC data across multiple patient positions.

The first non-trivial eigenfunction of the Laplace-Beltrami operator is known as the Fiedler vector representation. This representation can be used by itself for complex shape modeling of tubular structures such as the colon. Lai et al. \cite{lai:2010:intra} proposed the use of the Fiedler vector for supine and prone colon registration. However, their registration depends on the accurate detection of important landmarks, based on assumptions which often did not hold in our patient colon datasets, leading to a higher registration distance error.
Instead, we integrate the Fiedler vector representation with a more accurate feature detection based on haustral fold modeling to achieve better registration and visualization results. We first use the Fiedler vector representation for global registration of the scans. We then use 3D level sets computed based on this representation to segment the folds. Finally, we refine the registration locally via these folds as anatomical references.
An overview of our pipeline is given in Figure \ref{fig:pipeline}. The accurate modeling of the colon folds and the Fiedler vector representation results in automatic cross visualization of consistent endoluminal views across multiple orientations.

Different flattening approaches have been introduced to visualize the complex geometry of medical organs \cite{bartroli:2001,gurijala:2013,haker2000conformal,hong:2006:tvcg,marino:2011,marino:2016,schwartz:1986}. The Fiedler vector representation and fold segmentation also allows us to create more accurate 2D flattening visualizations by automatically computing consistent cuts throughout the colon without crossing the folds. These cuts are then used as an input to a quasi-conformal mapping \cite{zeng2010supine}, which is used to flatten the colon. Previous flattening approaches require manually selecting extrema on the colon and computing geodesic paths to create a cut. This cut, however, crosses folds on sharp bends since this is considered the shortest path, resulting in flattened visualizations with chopped folds, as shown in Figure \ref{fig:automatic_cut}.

There have been other registration approaches which require either using the centerline or flattening approaches which distort texture and require high quality data. Moreover, once the data is flattened to a 2D rectangle, it is difficult to find correspondences between the original data and the flattened data due to the loss of geometry. It is our endeavor here to be able to easily visualize this correspondence and correlate anatomical features across scans in the original and the flattened data.
Our registration framework makes use of accurate fold models to segment out folds with high precision. These fold segmentations are then used as anatomical references to register the VC data across scans. Moreover, the fold segmentations allow us to automatically find suitable cuts to map the 3D data to simpler 2D topologies and visualize the resultant data more effectively. 
Our contributions are as follows:
\begin{itemize}
\itemsep0em
  \item Integrated colon registration framework using Fiedler vector representation for global registration and 3D level set fold segmentation for local refinements.
  \item Accurate fold modeling which provides state-of-the-art detection and segmentation results on colon models.
  \item Accurate correspondence visualization between 3D and 2D flattening visualizations due to the Fiedler vector representation and accurate fold segmentation.
  \item Extraction of a consistent cut along the colon for visually accurate colon flattening.
  \item The theoretical proof of the hot spots conjecture for modeling general cylindrical topology using Fiedler vector representation.

\end{itemize}

The paper is organized as follows. In the next section, we discuss related work, followed by a brief theoretical background of Fiedler vector representation in our context in Section \ref{sec:theory}. This provides the basis for our colon registration algorithm, which we outline in Section \ref{sec:algorithm}. In Section \ref{sec:vis}, we highlight the supine and prone correspondence and more effective flattening visualizations based on our algorithms. Finally, we evaluate our algorithm and show additional results in Section \ref{sec:experiments}, followed by the conclusion and future work in Section \ref{sec:conc}.

\section{Related Work}
\label{sec:related}

There have been several works in the colon registration domain based on the identification of landmarks and centerline matching across supine/prone/side scans.
A basic method applies linear stretching and shrinking operations to the
centerline, where local extrema are matched and used to drive the
deformations~\cite{acar:2001:registration,acar:2001:embs}. Correlating individual points along the centerline through the use of dynamic programming has also been
suggested~\cite{devries:2006:bjr,li:2004:medphys,nain:2002:miccai}.

More recently, the taeniae coli (three bands of smooth muscle along the colon surface) have been used as features which can be correlated between the two
scans~\cite{huang:2005:vis}. This relies on a
manual identification of one of the three taeniae coli, and then an
automatic algorithm repeats the line representing the identified
taenia coli at equal distances. Further progress has been made where
the haustral folds and the points between them can be automatically
detected, and the taeniae coli are identified by connecting these
points~\cite{umemoto:2008:spie}. However, this method is only
feasible on the ascending and transverse portions of the colon.

Deformation fields have also been suggested for use in supine-prone
registration. Motion vectors can be identified for matched
centerline regions, interpolated for non-matched regions, and then
propagated to the entire volume~\cite{suh:2009:jcat}. It has also
been proposed to use a free-form deformation grid to model the
possible changes in the colon shape from supine to
prone~\cite{plishker:2008:miccai}.

Conformal mapping has been successfully used for many medical applications, including a brain cortex surface morphology study \cite{haker2000conformal} and colonic polyp detection \cite{hong:2006:tvcg}. Quasi-conformal mapping was used to map supine and prone colon datasets to a rectangular plane and convert the 3D registration problem into a 2D image matching problem \cite{zeng2010supine}. This, however, required landmarks to be extracted in order to divide the colon into different well-defined segments. Moreover, the whole registration pipeline was dependent on flattening these segments by tenaie coli extraction and then using graph matching. The flattening was computationally expensive, and precise fold segmentation could have improved the accuracy of the graph matching algorithm.

\begin{figure}[t!]
\begin{center}
\begin{tabular}{cc}
\includegraphics[width=0.175\textwidth]{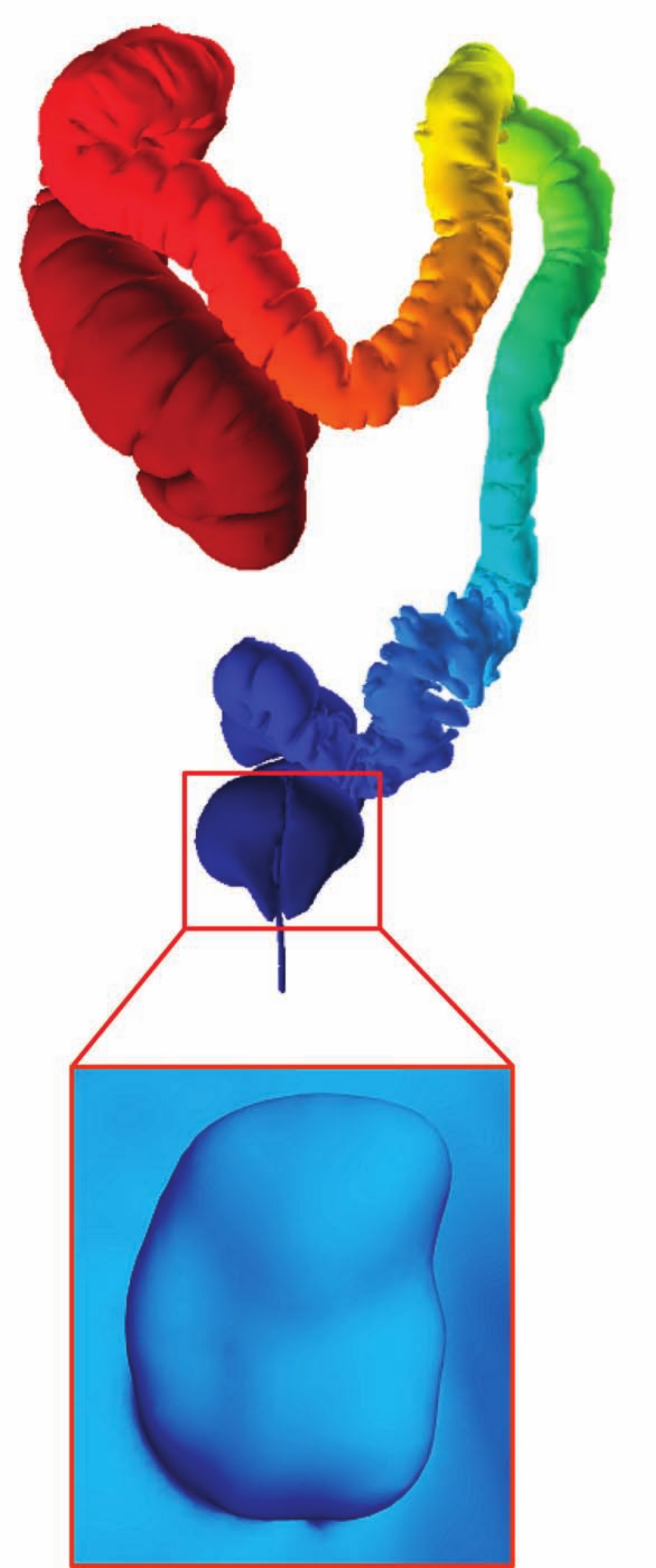}&
\includegraphics[width=0.175\textwidth]{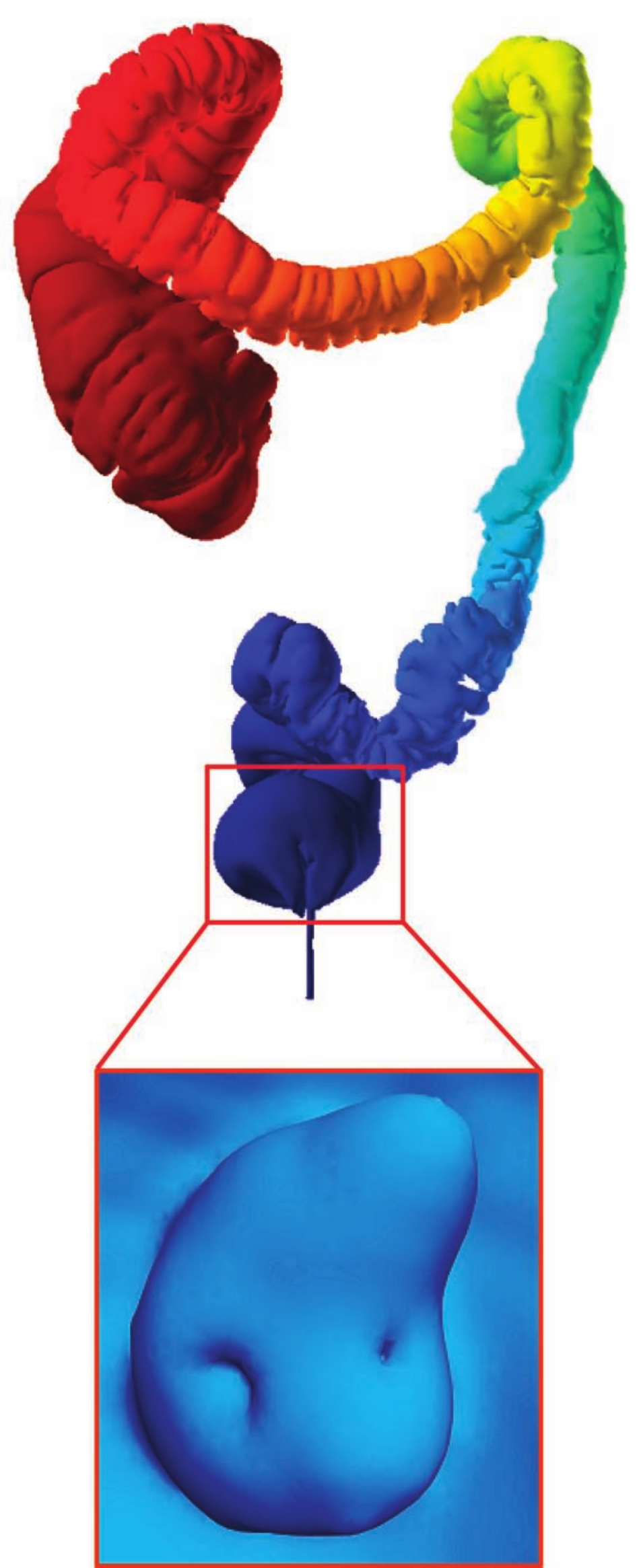}\\
(a) Supine & (b) Side
\end{tabular}
\end{center}
\vspace{-6mm}
\caption{Fiedler vector representation for (a) supine and (b) side colon models with corresponding polyps detected in the rectum.
\label{fig:supineside}}
\end{figure}

There has been minimal use of folds as anatomical references for registration due to the imprecise modeling of folds, thus leading to poor fold detection and segmentation results.  Effective segmentation of the colon folds is necessary for detecting polyps on the folds \cite{yao:2007:detecion}, extracting teniae coli \cite{huang:2005:vis,zhu:2011:automatic}, and performing supine-prone registration \cite{acar:2001:embs,acar:2001:registration,wang:2009:registration,zeng2010supine}. Most of the previous works have focused on colon fold detection \cite{chowdhury:2010:colonic,huang:2005:vis,wei:2010:teniae}, disregarding the precise segmentation or delineation of the boundaries of the fold. A fold boundary modeling and segmentation algorithm was introduced, and the segmented-area ratio ($SAR$) metric was also introduced to measure the accuracy of the segmentation \cite{zhu2013haustral}. In this work, we model and segment the folds (using Fiedler vector representation) on the supine-prone or supine-side VC dataset pairs which in turn are used as anatomical references to register these pairs.

Fiedler vector representation has been used in various applications including mesh processing \cite{gebal:2009}, mesh parameterization \cite{nadeem:2016}, and shape characterization \cite{boscaini:2015shape}. Lai et al. \cite{lai:2010:intra} have proposed the use of Fiedler vector representation for supine and prone colon registration. Based on the Fiedler vector computation for supine and prone colons, they detect the hepatic and splenic flexures, and register the supine and prone colons piecewise using these detected landmarks. They assume that the splenic and hepatic flexures are denoted by local maximum z-coordinates nearest to the maximum and minimum Fiedler vector values, respectively. This assumption, however, is frequently invalid in real patient colon datasets which we have tested on and can result in higher distance registration error compared to the registration done solely based on the Fiedler vector without considering the landmarks. Moreover, their algorithm does not take into account the Fiedler vector flips that can occur between the corresponding supine and prone colons which could lead to erroneous registration results (e.g. the rectum-splenic segment being registered to the cecum-hepatic segment).

\begin{figure}[t!]
\begin{center}
\begin{tabular}{c}
\includegraphics[width=0.38\textwidth]{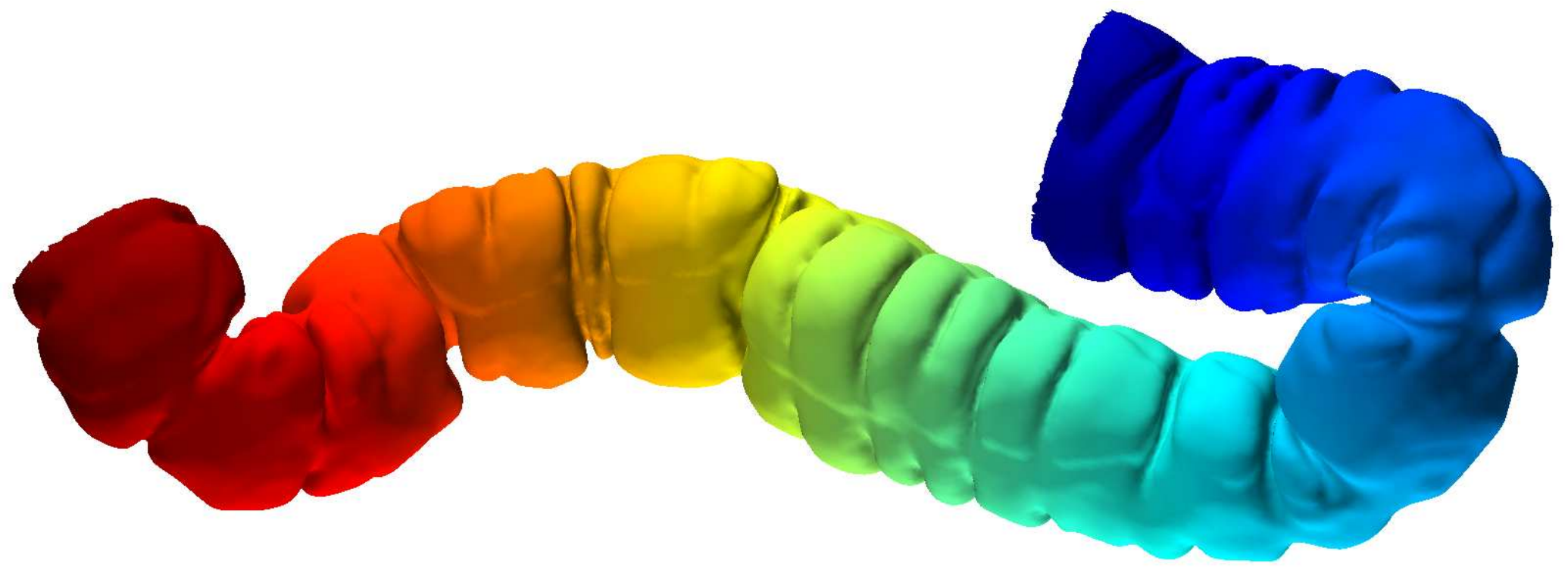}\vspace{-0.5mm}\\
(a) Fiedler vector on colon segment\\
\includegraphics[width=0.38\textwidth]{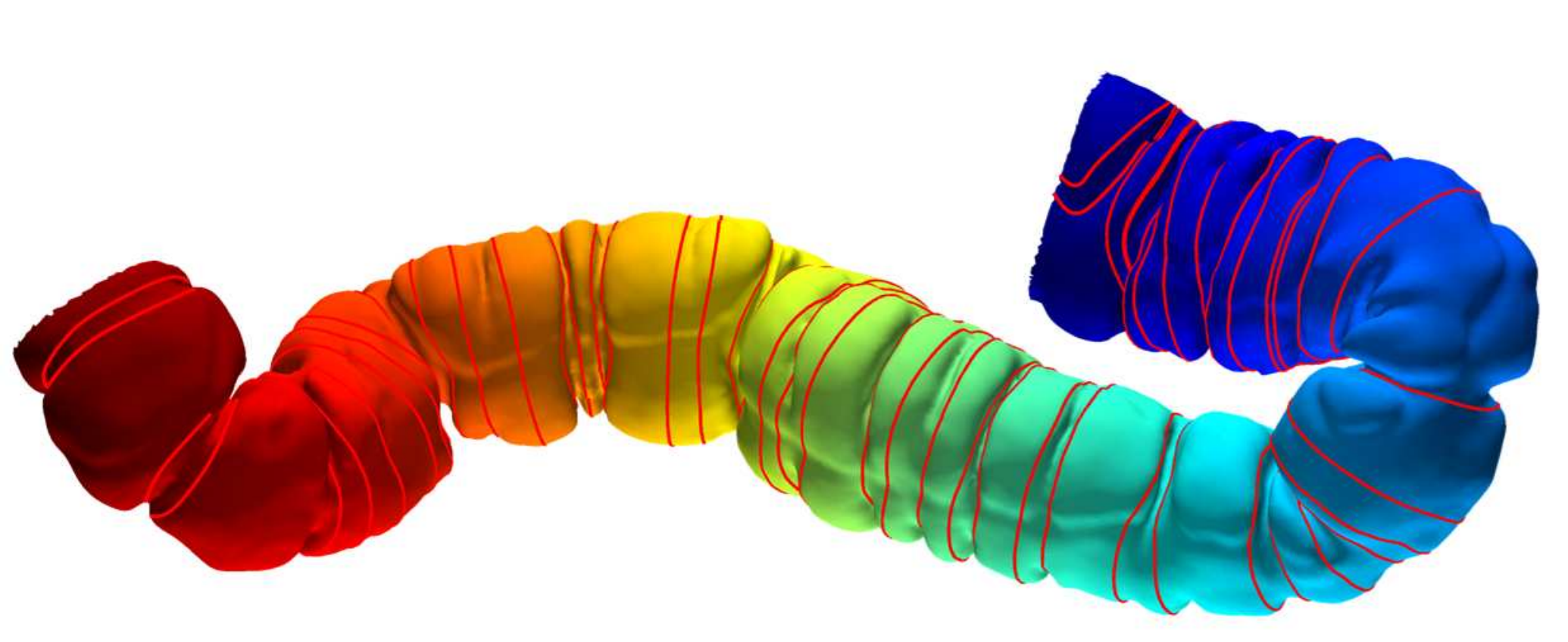}\vspace{-1.5mm}\\
(b) Fiedler vector level sets on colon segment\vspace{1mm}\\
\includegraphics[width=0.38\textwidth]{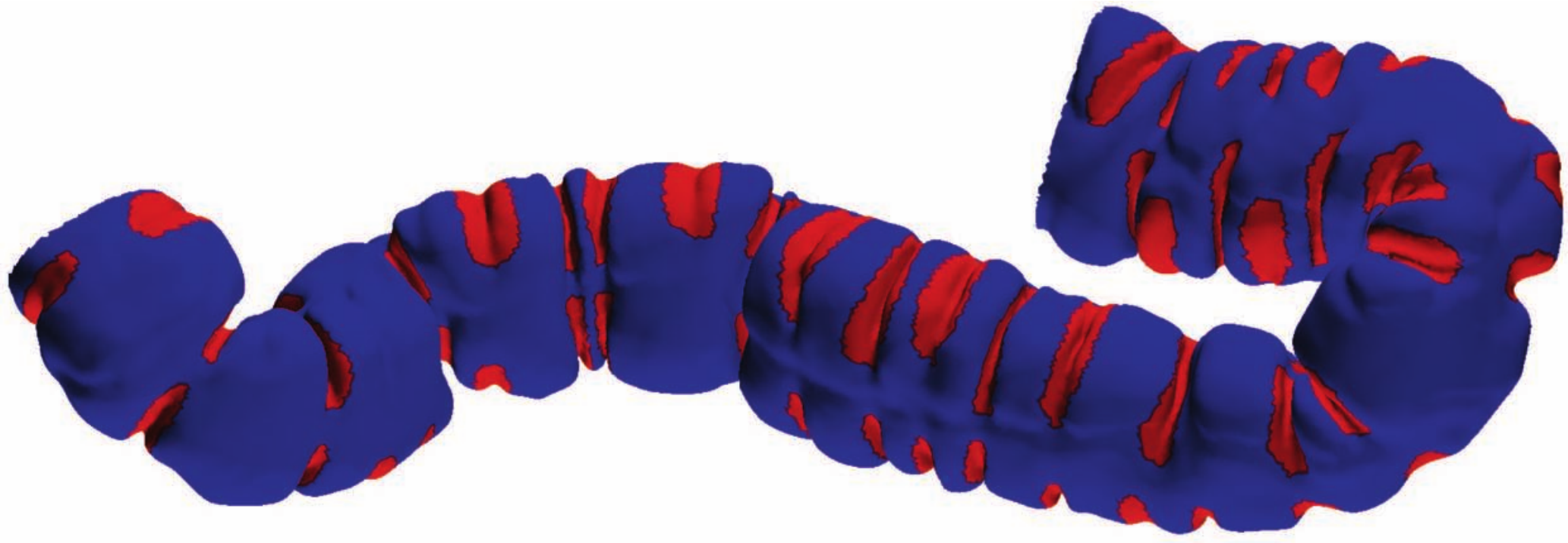}\\
(c) Segmented folds on colon segment\vspace{1mm}\\
\includegraphics[width=0.43\textwidth]{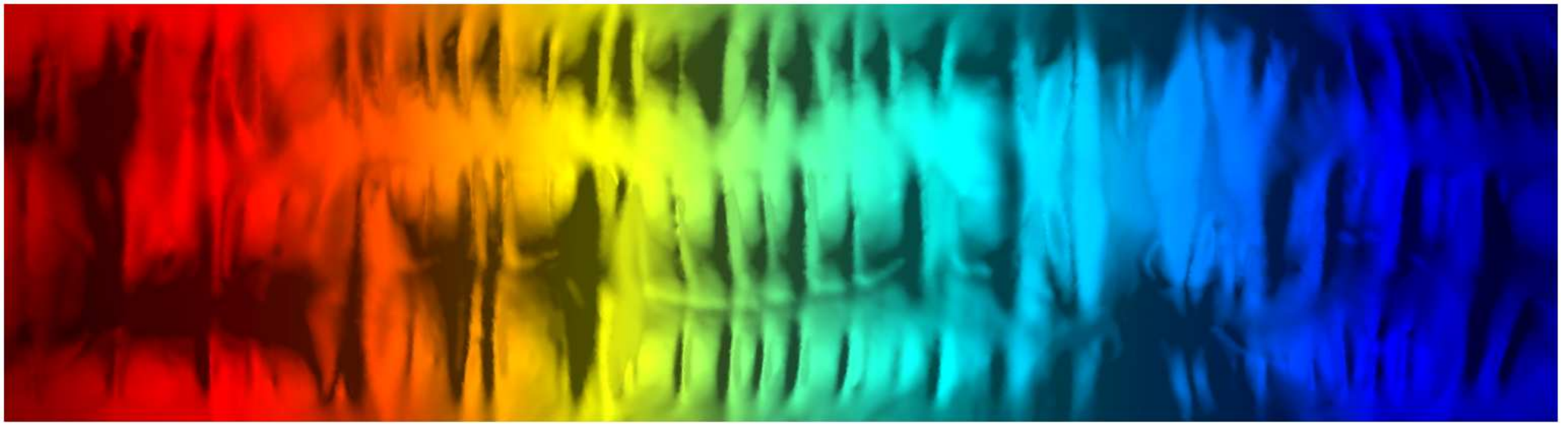}\\
(d) Fiedler vector on flattened colon segment\vspace{1mm}\\
\includegraphics[width=0.43\textwidth]{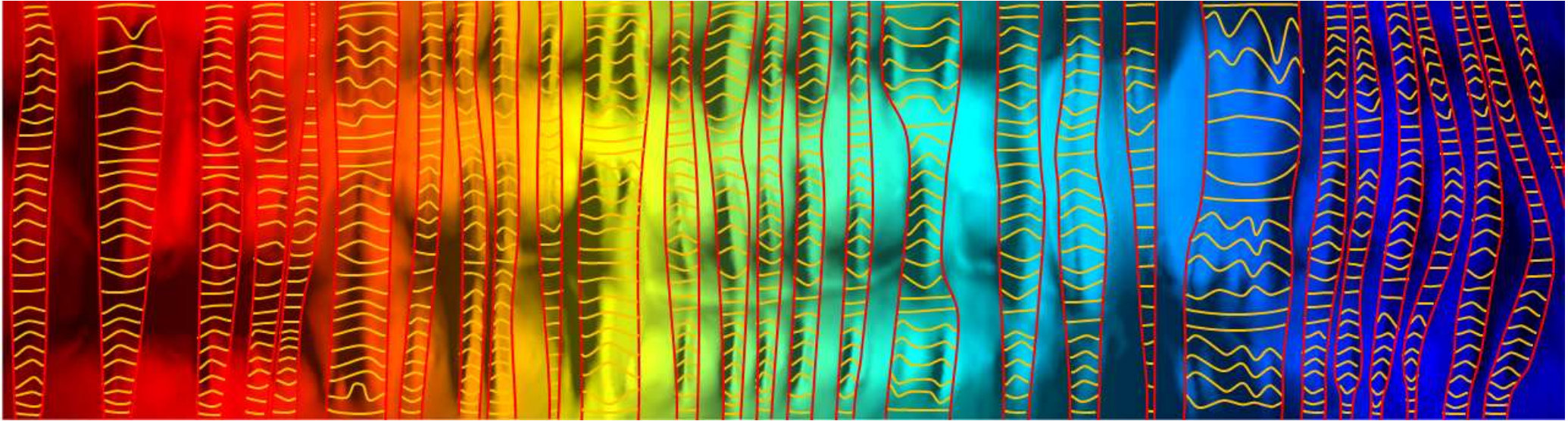}\\
(e) Fiedler vector level sets and cross curves on flattened segment\vspace{1mm}\\
\includegraphics[width=0.43\textwidth]{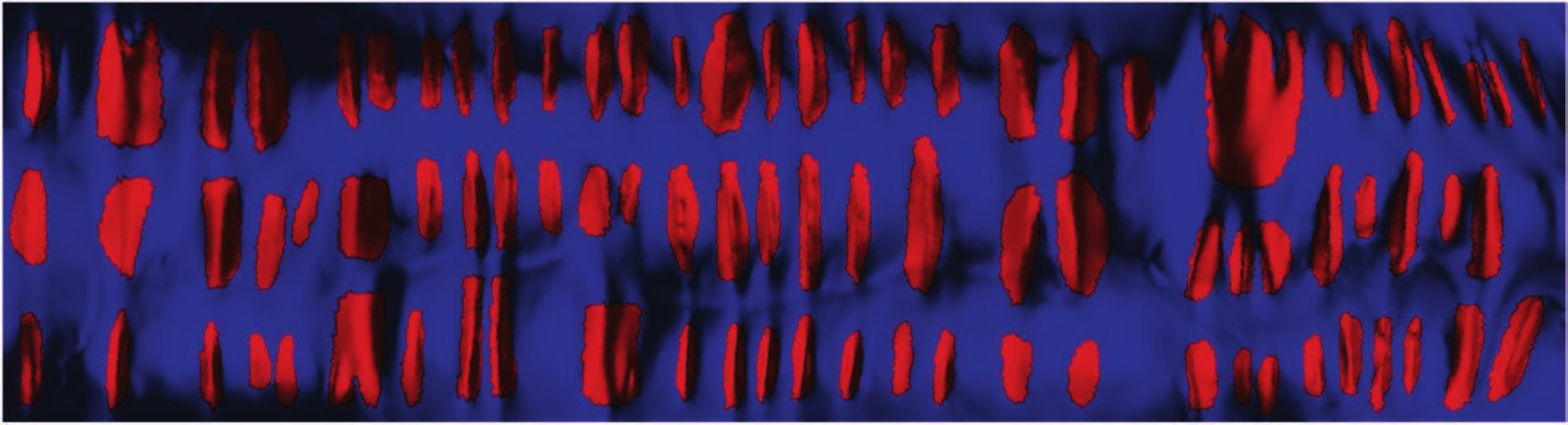}\\
(f) Segmented folds on flattened colon segment\\
\end{tabular}
\end{center}
\vspace{-5.2mm}
\caption{Haustral fold detection and segmentation pipeline. The computed Fiedler vector is shown for a colon segment in the (a) 3D view and (d) corresponding flattened view. The 3D level sets, illustrated with red lines, are computed to detect fold contours and are shown in the (c) 3D view and together with cross curves, illustrated in yellow lines, in the (e) flattened view.  Using these, the haustral folds are segmented, as shown in the (c) 3D view and (f) corresponding flattened view.
\label{fig:supine0}}
\vspace{-1mm}
\end{figure}

In contrast, we use the Fiedler vector representation to model the human colon, register the supine/prone/side models in 3D, and segment folds using level sets based on this representation. The Fiedler vector representation is used to register the supine/prone/side datasets globally and then the segmented folds are used as anatomical references to locally refine the registration results. The fold segmentation based on the level sets is efficient and the overall registration process does not depend on the computation of the centerline, the teniae coli, or the flexures. Our use of the Fiedler vector representation for fold segmentation is closely related to spectral image \cite{liu:2004:segmentation} and shape segmentation \cite{shi:2000:normalized}. The colon registration in 3D, in essence, is similar to shape matching using functional maps \cite{kovnatsky:2013:coupled,ovsjanikov:2012:functional}. 
\section{Theoretical Background}
\label{sec:theory}
In this section, we briefly introduce the theoretical background necessary for this work, which allows us to compute extrema and model the complex geometry of the colon. The hot spots conjecture \cite{chung:2011} states that the minimum and maximum of the Fiedler vector are the two points with the greatest geodesic distance on the surface. We prove in Lemma \ref{lem:hot_spot} that this conjecture holds for a topological cylinder.

\begin{figure*}[ht]
\begin{center}
\begin{tabular}{ccc}
\includegraphics[width=0.3\textwidth,height=0.20\textwidth]{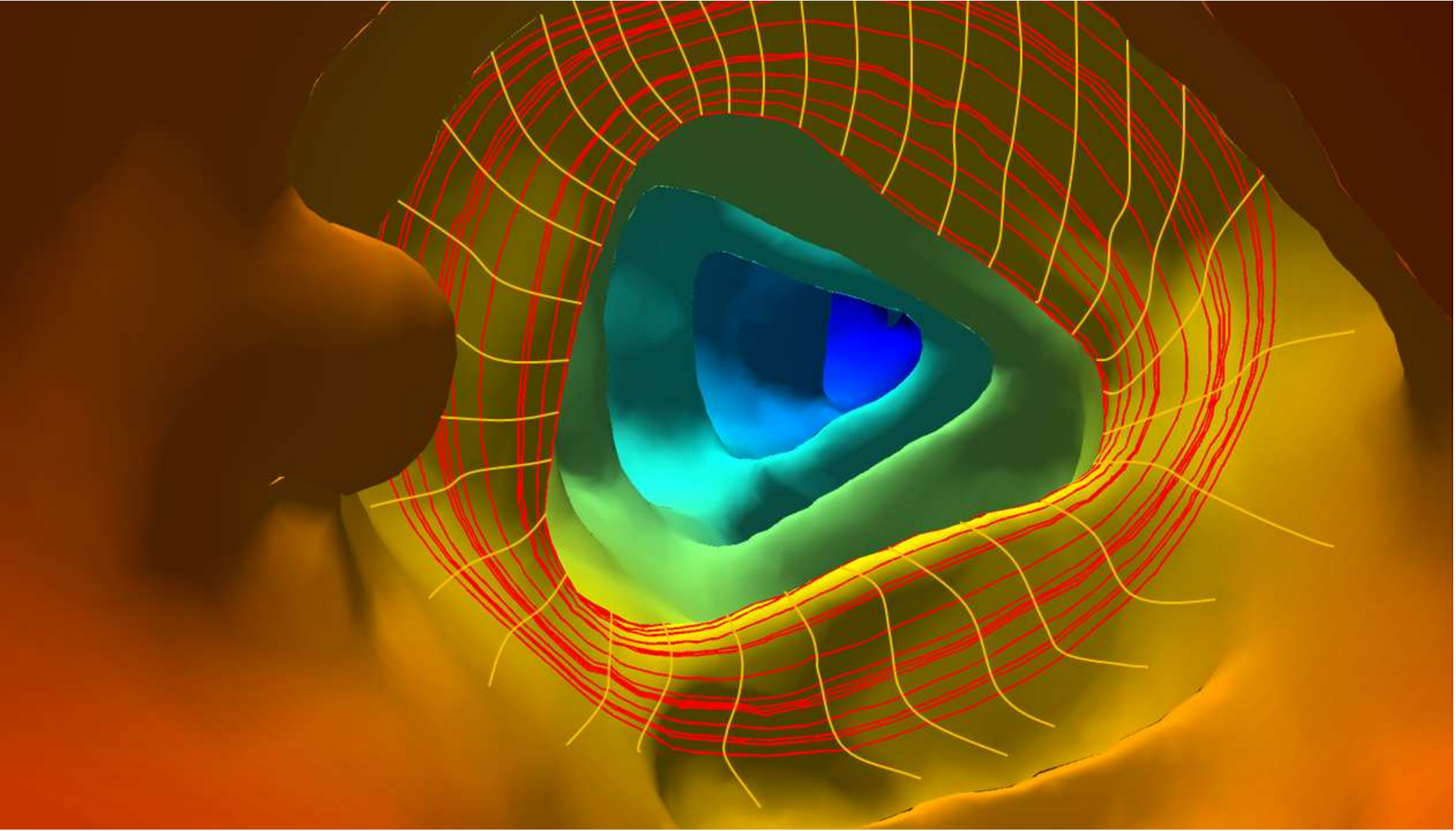} &
\includegraphics[width=0.3\textwidth,height=0.20\textwidth]{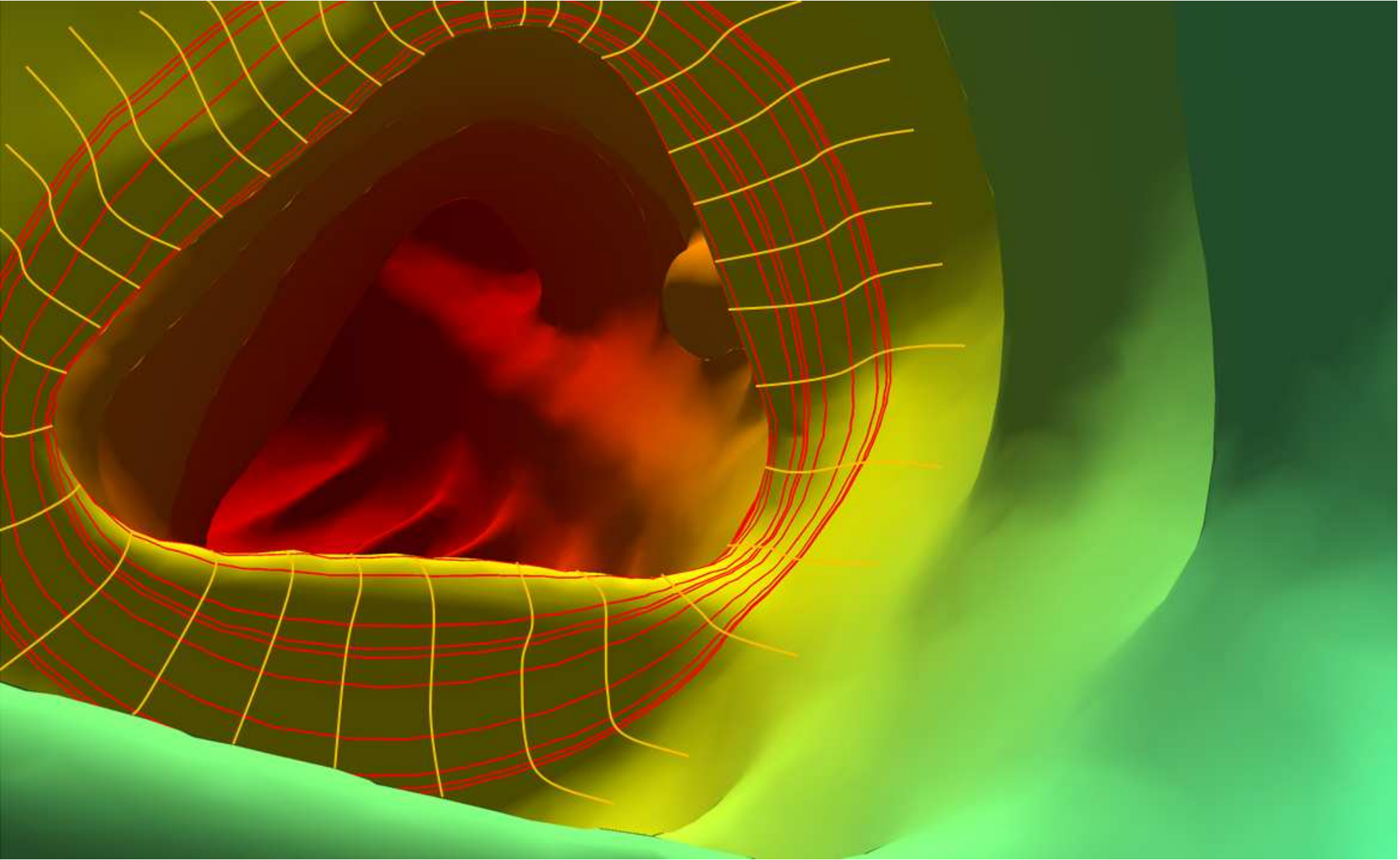} &
\includegraphics[width=0.3\textwidth,height=0.20\textwidth]{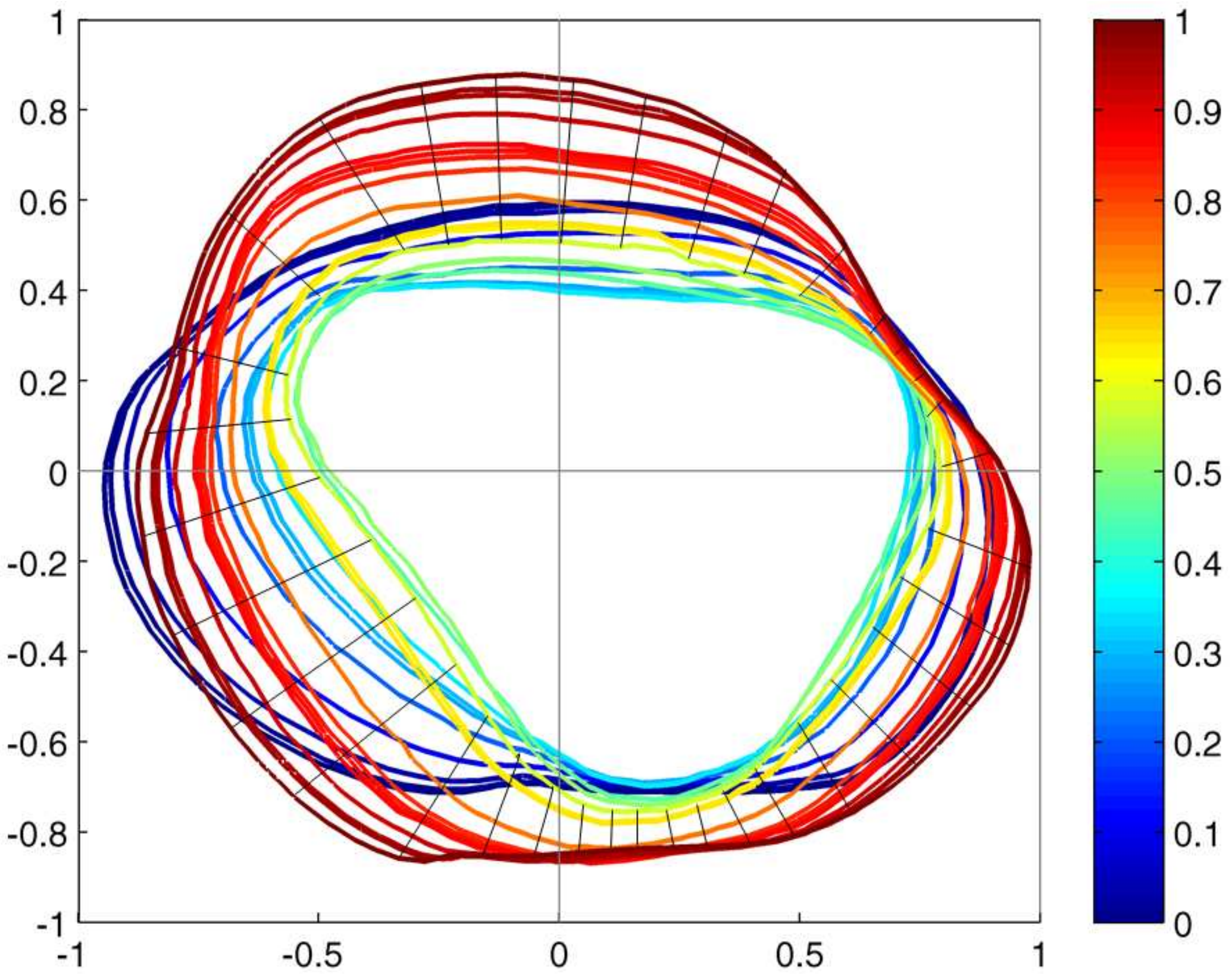}\\
(a) & (b) & (c)\\
\end{tabular}
\end{center}
\vspace{-6.5mm}
\caption{Haustral fold detection. (a) and (b) show two endoluminal views of the same colon segment with a 3D level set bundle (red loops) and corresponding cross curves (yellow lines). (c) Level sets projected on planes and normalized in 2D with the black lines representing the cross curves from an extreme level set in a bundle to the middle level set contour.
\label{fig:fold_planes}}
\vspace{-0.7mm}
\end{figure*}

Suppose $S$ is a surface embedded in the three-dimensional Euclidean space $\mathbb{R}^3$, and the induced Euclidean Riemannian metric is denoted as $\mathbf{g}$. One can choose a special type of local coordinates $(x,y)$, called \emph{isothermal coordinates}, such that the metric has a canonical form $\mathbf{g} = e^{2\lambda(x,y)}(dx^2+dy^2)$,
where the function $\lambda:S\to\mathbb{R}$ is called the \emph{conformal factor}. The \emph{Laplace Beltrami} operator induced by the metric is given by $\Delta_{\mathbf{g}} = \frac{1}{e^{2\lambda}} \left( \frac{\partial^2}{\partial x^2} + \frac{\partial^2}{\partial y^2} \right)$.
The \emph{Gaussian curvature} of the surface is given by $K(x,y) = -\Delta_\mathbf{g} \lambda(x,y)$.
An \emph{eigenfunction} of the Laplace-Beltrami operator is given by $\Delta_{\mathbf{g}} \phi_k  = \lambda_k \phi_k$,
where the \emph{eigenvalue} $\lambda_k \ge 0 $ is a non-negative real number. The Laplace-Beltrami operator has infinitely many eigenvalues and eigenfunctions. The eigenvalues are all sorted in ascending order
$0 = \lambda_0 \le \lambda_1 \le \lambda_2 \cdots \lambda_k \cdots$.
The eigenfunctions form an ortho-normal basis of the functional space of the surface, $\langle \varphi_i, \varphi_j \rangle = \int_S \varphi_i(p) \varphi_j(p) dA_\mathbf{g}=\delta_i^j$. The first eigenfunction $\varphi_0$ is a constant function. The second eigenfunction $\varphi_1$ is called \emph{Fiedler}'s function.

The \emph{heat kernel} of the surface is $K(p,q,t) = \sum_{k=0}^\infty e^{-\lambda_k t} \varphi_k(p)\varphi_k(q)$. The \emph{heat diffusion} equation on the surface is
\begin{equation}
    \frac{du(p,t)}{dt} = \Delta_\mathbf{g} u(p,t),
    \label{eqn:heat_equation}
\end{equation}
and the solution to the heat equation is given by the heat kernel $u(p,t) = \int_{S} K(p,q,t) u(q,0) dA_\mathbf{g}$. Another equivalent way to represent the solution to the heat equation is
\begin{equation}
    u(p,t) = \sum_{k=0}^\infty  \tau_k e^{-\lambda_k t} \varphi_k(p) ,
    \label{eqn:solution}
\end{equation}
where
\[
    \tau_k = \langle \varphi_k, u(p,0)\rangle = \int_S u(q,0) \varphi_k(q) dA_\mathbf{g}.
\]
When $t$ goes to infinity, the right hand side of Equation \ref{eqn:heat_equation} goes to $0$, which means that the temperature function $u(p,t)$ becomes harmonic.


\begin{lemma}\vspace{-2mm} Suppose $u:S\to\mathbb{R}$ is a harmonic function, then for any interior point $p\in S$,
\begin{equation}
    u(p) = \frac{1}{2\pi} \oint_\gamma u(q) dq,
    \label{eqn:mean_value}
\end{equation}
where $\gamma$ is a small circle surrounding $p$.
\end{lemma}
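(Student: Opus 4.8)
\emph{Proof plan.} The plan is to reduce the statement to the classical mean value property for Euclidean harmonic functions by passing to isothermal coordinates, and then to establish that classical fact by differentiating the circular average and applying the divergence theorem.

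First I would fix the interior point $p$ and choose isothermal coordinates $(x,y)$ on a neighbourhood $U$ of $p$, centred so that $p$ is the origin, in which $\mathbf{g} = e^{2\lambda(x,y)}(dx^2+dy^2)$. Because $\Delta_{\mathbf{g}} = e^{-2\lambda}\left(\frac{\partial^2}{\partial x^2}+\frac{\partial^2}{\partial y^2}\right)$ and $e^{2\lambda}>0$ everywhere, the intrinsic equation $\Delta_{\mathbf{g}}u=0$ on $U$ is equivalent to $u_{xx}+u_{yy}=0$, i.e.\ $u$ is harmonic in the ordinary Euclidean sense inside the chart. Hence it suffices to prove that a Euclidean-harmonic function on a small planar disk centred at the origin satisfies $u(0)=\frac{1}{2\pi}\int_0^{2\pi}u(r\cos\theta,r\sin\theta)\,d\theta$ for all sufficiently small $r$; this is precisely Equation~\ref{eqn:mean_value}, with $\gamma$ the coordinate circle of radius $r$ and $dq$ the angular element $d\theta$.

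For the Euclidean claim I would introduce the circular average
\[
\Phi(r)=\frac{1}{2\pi}\int_0^{2\pi}u(r\cos\theta,r\sin\theta)\,d\theta,\qquad 0<r<r_0,
\]
and differentiate under the integral sign. Writing $\hat{n}=(\cos\theta,\sin\theta)$ for the outward unit normal of the circle $\gamma_r$ of radius $r$ and $D_r$ for the disk it bounds, the radial derivative of the integrand is $\nabla u\cdot\hat{n}$, and converting the $\theta$-integral into a line integral over $\gamma_r$ (on which $ds=r\,d\theta$) and invoking Green's first identity gives
\[
\Phi'(r)=\frac{1}{2\pi r}\oint_{\gamma_r}\partial_n u\,ds=\frac{1}{2\pi r}\int_{D_r}\bigl(u_{xx}+u_{yy}\bigr)\,dA=0
\]
since $u$ is harmonic. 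Thus $\Phi$ is constant on $(0,r_0)$, and continuity of $u$ at the origin yields $\lim_{r\to0^+}\Phi(r)=u(0)$, so $\Phi(r)=u(0)$ for every small $r$, which is the asserted identity.

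There is no genuine obstacle here — this is the textbook argument — but the one point that deserves care is the reduction from the intrinsic hypothesis to the flat Laplace equation, which rests entirely on the conformal (isothermal) form of the metric and the positivity of the conformal factor $e^{2\lambda}$; relatedly, one should read Equation~\ref{eqn:mean_value} as a statement about coordinate circles in such a chart, noting that as $r\to0$ coordinate circles and geodesic circles agree to leading order and the normalization $\tfrac{1}{2\pi}$ is exactly integration against the angle.
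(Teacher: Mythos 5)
Your proof is correct, but it takes a genuinely different route from the paper's. Both arguments begin identically, by passing to isothermal coordinates and using the positivity of $e^{2\lambda}$ to reduce $\Delta_{\mathbf{g}}u=0$ to the flat equation $u_{xx}+u_{yy}=0$. From there the paper goes complex-analytic: it constructs the conjugate harmonic function $v$ via the Cauchy--Riemann equations, forms the holomorphic function $F=u+iv$, applies Cauchy's integral formula on the circle $w=z+\varepsilon e^{i\theta}$ (for which $\frac{dw}{w-z}=i\,d\theta$), and reads off the mean value identity from the real part. You instead stay in real-variable territory: you show the circular average $\Phi(r)$ has vanishing derivative by converting $\Phi'(r)$ into a flux integral and applying Green's identity, then let $r\to 0^+$. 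Your argument is more elementary (no need to construct a harmonic conjugate, which in the paper's version implicitly uses simple connectivity of the coordinate chart) and it generalizes verbatim to higher dimensions; the paper's version is shorter once complex function theory is taken as given and fits the conformal-geometry framing of the rest of the section. Your closing remark that the $\tfrac{1}{2\pi}$ normalization forces $dq$ to be read as the angular element $d\theta$ is a useful clarification that the paper leaves implicit.
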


\begin{proof}\vspace{-2mm} If we choose isothermal coordinates $(x,y)$, then $u(x,y)$ is a harmonic function and $\Delta u =0$. We construct the conjugate harmonic function $v:S\to\mathbb{R}$, such that $u_x = v_y$ and $u_y = -v_x$, then the complex function $F:S\to\mathbb{C}$ is a holomorphic function, where $z=x+iy$. Note that $F$ is constructed using $u$ as a real part and $v$ as an imaginary part. By Cauchy's formula, we obtain
\[
    F(z) = \frac{1}{2\pi i} \oint_{\gamma} \frac{F(w)}{w-z} dw,
\]
where $w\in \gamma$, $w = z + \varepsilon e^{i\theta}$. Comparing the real and imaginary parts, we obtain Equation \ref{eqn:mean_value}.
\end{proof}

\begin{corollary}\vspace{-2mm} A metric surface $(S,\mathbf{g})$ has boundaries $\partial S$.  Suppose $u$ is a harmonic function on $S$, then the maximal and minimal points of $u$ are on $\partial S$.
\end{corollary}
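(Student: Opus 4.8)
The plan is to recognize this as the maximum principle for harmonic functions and to derive it directly from the mean value property established in the preceding Lemma. I will assume (as holds in our setting, where $S$ is a topological cylinder) that $S$ is compact and connected, so that the continuous function $u$ attains its extrema, and I will show that a non-constant harmonic function cannot attain its maximum at an interior point; the statement for the minimum then follows by applying the argument to $-u$, which is again harmonic.

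First I would suppose, for contradiction, that $u$ attains its maximum value $M=\max_S u$ at some interior point $p_0\in S\setminus\partial S$. Working in isothermal coordinates near $p_0$ and applying Equation~\ref{eqn:mean_value} to a small circle $\gamma$ of radius $\varepsilon$ about $p_0$, we get $M=u(p_0)=\frac{1}{2\pi}\oint_\gamma u(q)\,dq$. Since $u(q)\le M$ everywhere on $\gamma$ while the average of $u$ over $\gamma$ equals $M$, continuity of $u$ forces $u\equiv M$ on $\gamma$. As $\varepsilon$ ranges over all sufficiently small values, this shows $u\equiv M$ on an entire coordinate disk around $p_0$.

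Next I would promote this local statement to a global one by a connectedness argument. Writing $S^\circ=S\setminus\partial S$ for the interior, let $A=\{p\in S^\circ : u(p)=M\}$. Then $A$ is nonempty (it contains $p_0$), relatively closed in $S^\circ$ as the preimage of $\{M\}$ under the continuous map $u$, and relatively open by the mean value argument above applied verbatim at every point of $A$. Since $S$ is connected, $S^\circ$ is connected, and hence $A=S^\circ$; by continuity $u\equiv M$ on all of $S=\overline{S^\circ}$. Thus $u$ is constant, contradicting the assumption that it is non-constant. Therefore the maximum of $u$ is attained only on $\partial S$, and the same reasoning applied to $-u$ yields the claim for the minimum.

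The main obstacle is precisely the step that upgrades \emph{$u$ is locally constant near an interior maximum} to \emph{$u$ is globally constant}: it relies on connectedness of $S$ and on the interior being dense in $S$, so that the equality $u\equiv M$ propagates out to the boundary. I would also flag the degenerate case in which $u$ is itself constant, where every point is simultaneously a maximum and a minimum; the corollary should be read as the assertion that the extreme \emph{values} of a non-constant harmonic $u$ are attained on $\partial S$. Everything else is a routine consequence of the mean value property and the compactness of $S$.
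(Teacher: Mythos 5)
Your proof is correct and follows essentially the same route as the paper: both derive the conclusion from the mean value property established in the preceding Lemma. Yours is simply the fully rigorous version of the paper's one-line argument---you supply the ``equality in the average forces local constancy'' step, the connectedness argument that propagates it globally, and the caveat about constant $u$, all of which the paper glosses over.
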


\begin{proof}\vspace{-2mm} Equation \ref{eqn:mean_value} means that the value of an interior point $u(p)$ is the mean of the values of its neighboring points on the small circle $\gamma$. Therefore, the maximal and minimal points cannot be in the interior of $S$ and hence, these points must be on the boundaries $\partial S$.
\end{proof}

\begin{lemma}\vspace{-2mm} Suppose the metric surface $(S,\mathbf{g})$ is a topological cylinder, then the maximal and minimal points of its Fiedler vector $\varphi_1$ are on the boundaries of the surface $\partial S$.
\label{lem:hot_spot}
\end{lemma}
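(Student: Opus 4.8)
\emph{Reductions.} Since $\varphi_1\perp\varphi_0$ and $\varphi_0$ is a nonzero constant, $\int_S\varphi_1\,dA_{\mathbf g}=0$, so $\varphi_1$ changes sign and $M:=\max_{\bar S}\varphi_1>0>m:=\min_{\bar S}\varphi_1$, both attained by compactness of $\bar S$. Also $\lambda_1>0$: otherwise $\varphi_1$ is harmonic with vanishing normal flux on $\partial S$, hence constant, hence identically zero, contradicting $\|\varphi_1\|=1$. So it suffices to prove $\max_{\partial S}\varphi_1=M$, the statement for $m$ following by applying this to $-\varphi_1$, which is again a Fiedler vector. The conclusion cannot be obtained pointwise: at an interior maximum $p_0$ the second--derivative test gives only that $\Delta_{\mathbf g}\varphi_1(p_0)$ and $\varphi_1(p_0)$ have opposite signs, which the eigenvalue equation already forces, so the global and topological structure of $\varphi_1$ must be used.

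\emph{Geometry of $\varphi_1$.} The plan is to route the argument through the maximum principle of the Corollary above: a solution of the heat equation becomes harmonic as $t\to\infty$, and a harmonic function attains its extrema on the boundary. To apply this I first describe the nodal structure of $\varphi_1$. By Courant's nodal--domain theorem the first non-constant Neumann eigenfunction has at most two nodal domains, and since $\varphi_1$ changes sign it has exactly two, $\Omega_{\pm}=\{\pm\varphi_1>0\}$, separated by the nodal set $\mathcal N=\{\varphi_1=0\}$. Because $S$ is a topological cylinder, $\mathcal N$ is topologically constrained: it is a separating meridian, or a pair of arcs joining the two boundary circles. The remaining possibility, a null--homotopic loop bounding an interior disk on which $\varphi_1$ has a fixed sign (which would trap an extreme value strictly inside that disk), is ruled out because $\varphi_1$ restricted to that disk would then be the sign--definite ground state of a proper subdomain of $S$ carrying full Dirichlet boundary yet still the eigenvalue $\lambda_1$ — impossible for eigenvalue--monotonicity reasons. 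In the surviving cases $\overline{\Omega_+}$ meets $\partial S$, and likewise $\overline{\Omega_-}$.

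\emph{Localized maximum principle.} On $\Omega_+$ the restriction $\varphi_1|_{\Omega_+}$ is sign--definite and solves the mixed eigenvalue problem with Dirichlet data $0$ on $\mathcal N$ and the Neumann condition on $\partial S\cap\overline{\Omega_+}$, with eigenvalue $\lambda_1$; being positive it is the ground state of that mixed problem. Running the heat flow for this mixed problem, the solution $e^{-\lambda_1 t}\varphi_1|_{\Omega_+}$ becomes harmonic as $t\to\infty$, so by the Corollary its spatial maximum sits on $\partial\Omega_+$; since the Dirichlet portion $\mathcal N$ is pinned to the value $0<M$, the maximum $M$ must occur on the Neumann portion $\partial S\cap\overline{\Omega_+}\subset\partial S$. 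Applying the same reasoning to $-\varphi_1$ on $\Omega_-$ places $m$ on $\partial S$; in fact $M$ and $m$ land on the two different boundary circles, which is the statement that identifies the hot spots as the pair of points of maximal geodesic separation.

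\emph{Main obstacle.} The crux is the last step: making rigorous that a sign--definite, mixed--boundary ground state on a topological cylinder cannot peak at an interior point. This is precisely the subtle content of the hot spots conjecture — which is \emph{false} for some multiply connected planar domains — so the cylinder hypothesis must genuinely be exploited and not merely invoked; a sufficiently bent or pinched cylinder is a priori a candidate for an interior hot spot, and excluding it needs more than the mean--value and maximum--principle facts already in hand. The route I would pursue is to pass to the double of $S$ across $\partial S$, a torus on which $\varphi_1$ extends to a reflection--even eigenfunction of small eigenvalue whose level sets are meridians outside finitely many critical points; the reflection symmetry across the two fixed circles, combined with a Hopf--lemma argument tracked along those meridian level sets, should drive the extrema onto the fixed set $\partial S$. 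Controlling the merely Lipschitz doubled metric along $\partial S$, and stating the mixed--boundary eigenvalue comparisons precisely, are the technical points I expect to absorb most of the effort.
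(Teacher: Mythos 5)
Your reductions (the sign change of $\varphi_1$, the positivity of $\lambda_1$, Courant's bound of two nodal domains, and the exclusion of a null-homotopic nodal loop by eigenvalue monotonicity) are sound and are genuinely more careful than anything in the paper. But the proposal does not close. The step you flag yourself in the final paragraph --- that the sign-definite ground state of the mixed Dirichlet--Neumann problem on $\Omega_+$ attains its maximum on the Neumann portion $\partial S\cap\overline{\Omega_+}$ rather than at an interior point --- is the entire content of the lemma, and you leave it as a programme (doubling across $\partial S$, reflection symmetry, a Hopf-lemma argument along meridian level sets) rather than an argument. Moreover, the ``localized maximum principle'' paragraph that is supposed to carry the load is vacuous as written: the extremum locations of $e^{-\lambda_1 t}\varphi_1|_{\Omega_+}$ are independent of $t$, and its $t\to\infty$ limit is the zero function, which is harmonic but attains its extrema everywhere. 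The Corollary localizes extrema only for functions that \emph{are} harmonic; nothing transfers that localization back to $\varphi_1$ from a limit that has forgotten where the extrema were. So the gap is genuine.

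For what it is worth, the paper's own proof is precisely the heat-flow heuristic you correctly distrust: it asserts that $u(p,t)$ ``converges to a harmonic function,'' so its extrema ``are close to'' $\partial S$, and that for large $t$ the extrema of $u(\cdot,t)$ are those of $\varphi_1$ because the expansion is dominated by $\tau_0\varphi_0 + \tau_1 e^{-\lambda_1 t}\varphi_1$. The same objection applies there: the harmonic limit is the constant $\tau_0\varphi_0$, extremum locations need not pass to a limit, and the hot spots statement is known to fail for some multiply connected planar domains, so no argument at this level of generality can suffice --- the cylinder hypothesis must be used in an essential way, as you observe. Your nodal-domain analysis and your explicit identification of where that hypothesis must enter improve on the paper, but neither your proposal nor the paper's proof constitutes a complete proof of the lemma as stated.
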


\begin{proof}\vspace{-2mm}
Suppose the solution to Equation \ref{eqn:heat_equation} is $u(p,t)$. When $t$ goes to infinity, $u(p,t)$ converges to a harmonic function, and therefore the maximal and minimal points of $u(p,t)$ are close to the boundaries $\partial S$.

From Equation \ref{eqn:solution}, when $t$ becomes large enough, the high order eigenfunctions go to $0$ much faster. Therefore, the behavior of $u(p,t)$ is mainly controlled by the first two terms,
\[
    u(p,t)\sim \tau_0 \varphi_0(p) + \tau_1 e^{-\lambda_1 t} \varphi_1(p),
\]
where $\varphi_0$ is constant. Therefore, the maximal and minimal points of $u(p,t)$ are approximated by those of $\varphi_1(p)$, and thus the maximal and minimal points of $\varphi_1(p)$ approach the surface boundaries $\partial S$.
\end{proof}

\paragraph{Level Sets of Fiedler's Function} Suppose $\varphi:(S,\mathbf{g})\to (T,\mathbf{h})$ is a diffeomorphic mapping between two metric surfaces. The local coordinates of $S$ are $(x,y)$ and those of $T$ are $(u,v)$. The \emph{Jacobian} matrix of the mapping $\varphi$ is $J_\varphi$. The \emph{pull back} metric induced by the mapping $\varphi$ is defined by $\varphi^*\mathbf{h} = J_\varphi^T \mathbf{h} J_\varphi$. The mapping $\varphi$ is \emph{conformal} if there is a function $\mu:S\to\mathbb{R}$,
\begin{equation}
    \varphi^*\mathbf{h} = e^{2\mu} \mathbf{g}.
    \label{eqn:conformal_factor}
\end{equation}
According to conformal geometry, if $(S,\mathbf{g})$ is a topological cylinder, then there is a conformal mapping which maps the surface onto a flat cylinder $\mathcal{C}$, $\mathcal{C} = \{(\cos\theta,\sin\theta,h)| 0\le \theta < 2\pi, 0\le h \le H\}$.
The second eigenfunction of $\mathcal{C}$ is $\varphi_1(\theta,h) = e^{h}$. The level sets of the second eigenfunction are circles with constant height $h=const$.

Suppose a topological cylindrical surface $(S,\mathbf{g})$ is conformally mapped onto a flat cylinder $\mathcal{C}$.  Furthermore, the mapping is near-isometric, namely, the conformal factor $\mu$ in Equation~\ref{eqn:conformal_factor} is close to $0$. Then the level sets of the second eigenfunction of $S$ are similar to those on $\mathcal{C}$. In our current work, we use the level sets of the second eigenfunction to locate the folds on the colon surface.

\section{Computational Algorithm}
\label{sec:algorithm}

\begin{figure*}[ht!]
\begin{center}
\begin{tabular}{ccccc}
\includegraphics[width=0.17\textwidth,angle=180, origin=c]{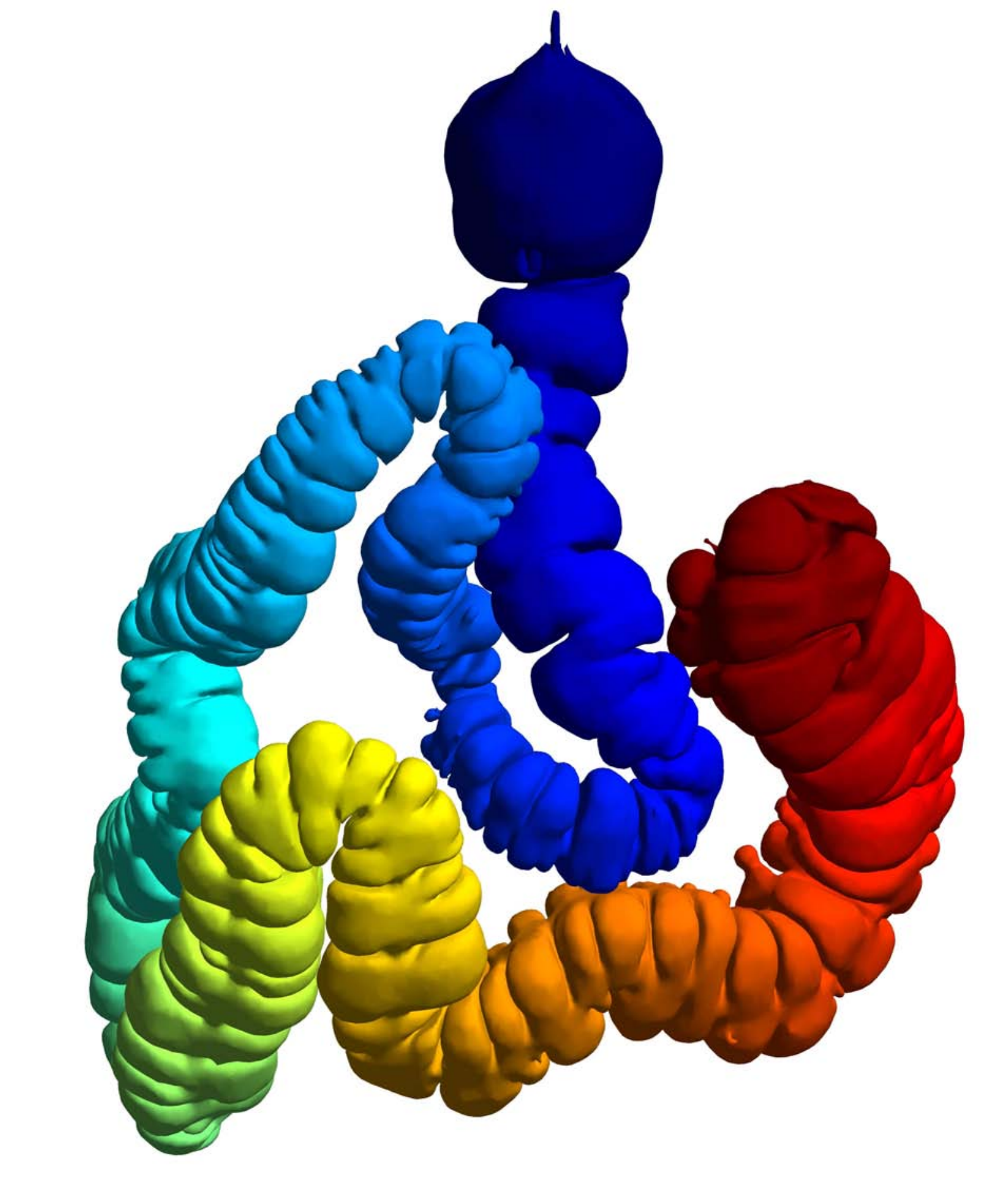} &
\includegraphics[width=0.17\textwidth,angle=180, origin=c]{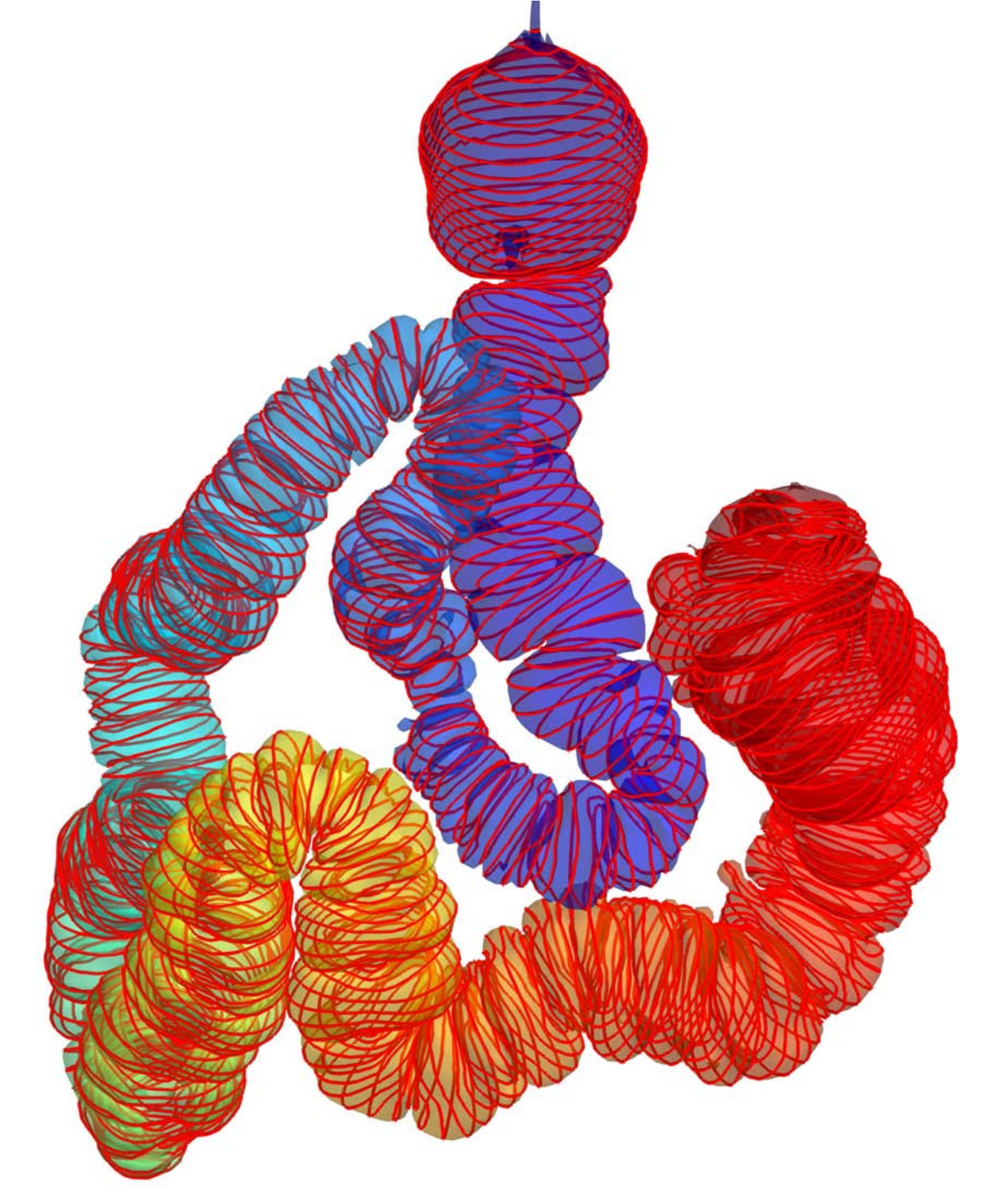} &
\includegraphics[width=0.16\textwidth,angle=180, origin=c]{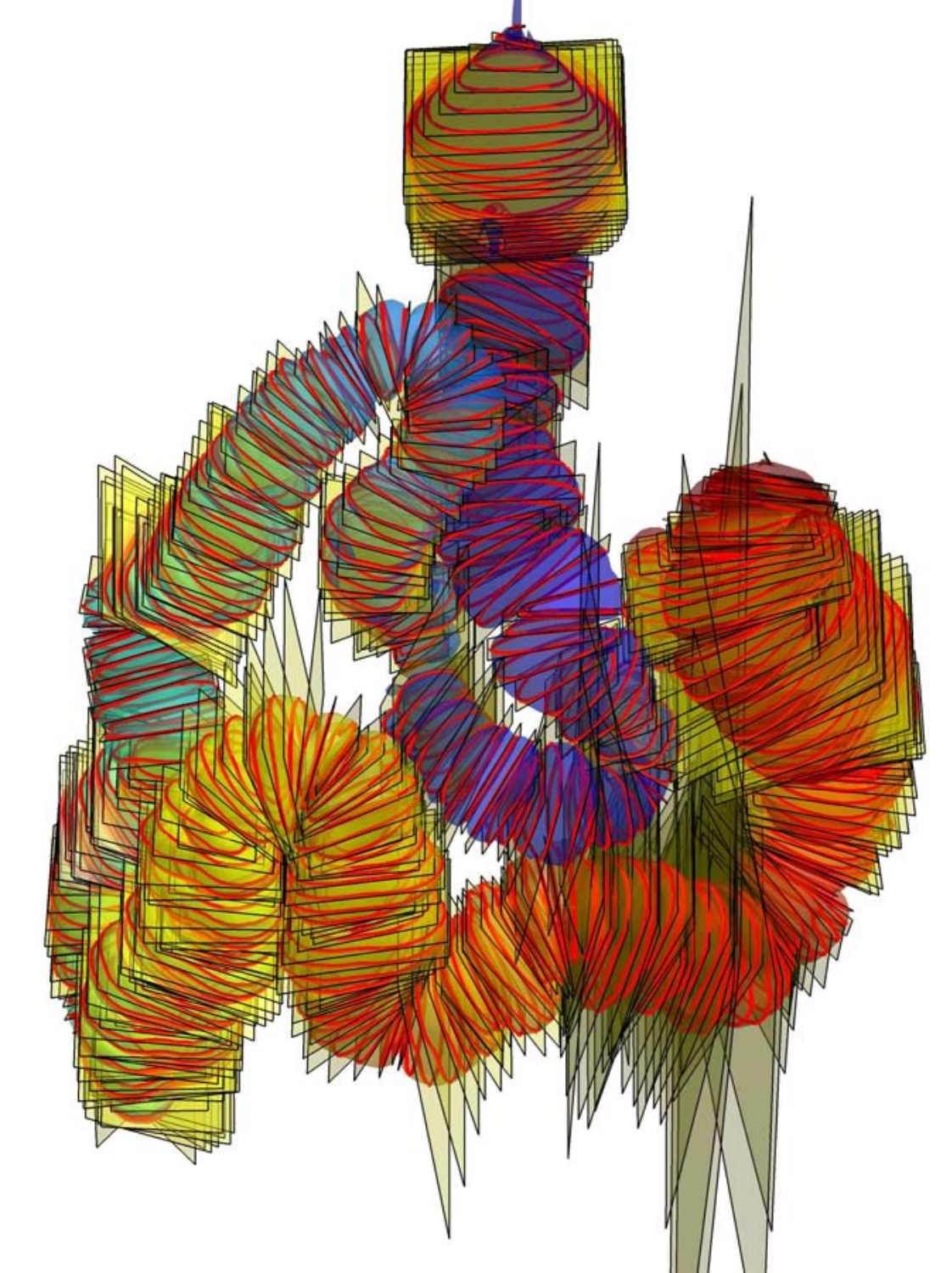} &
\includegraphics[width=0.18\textwidth,angle=180, origin=c]{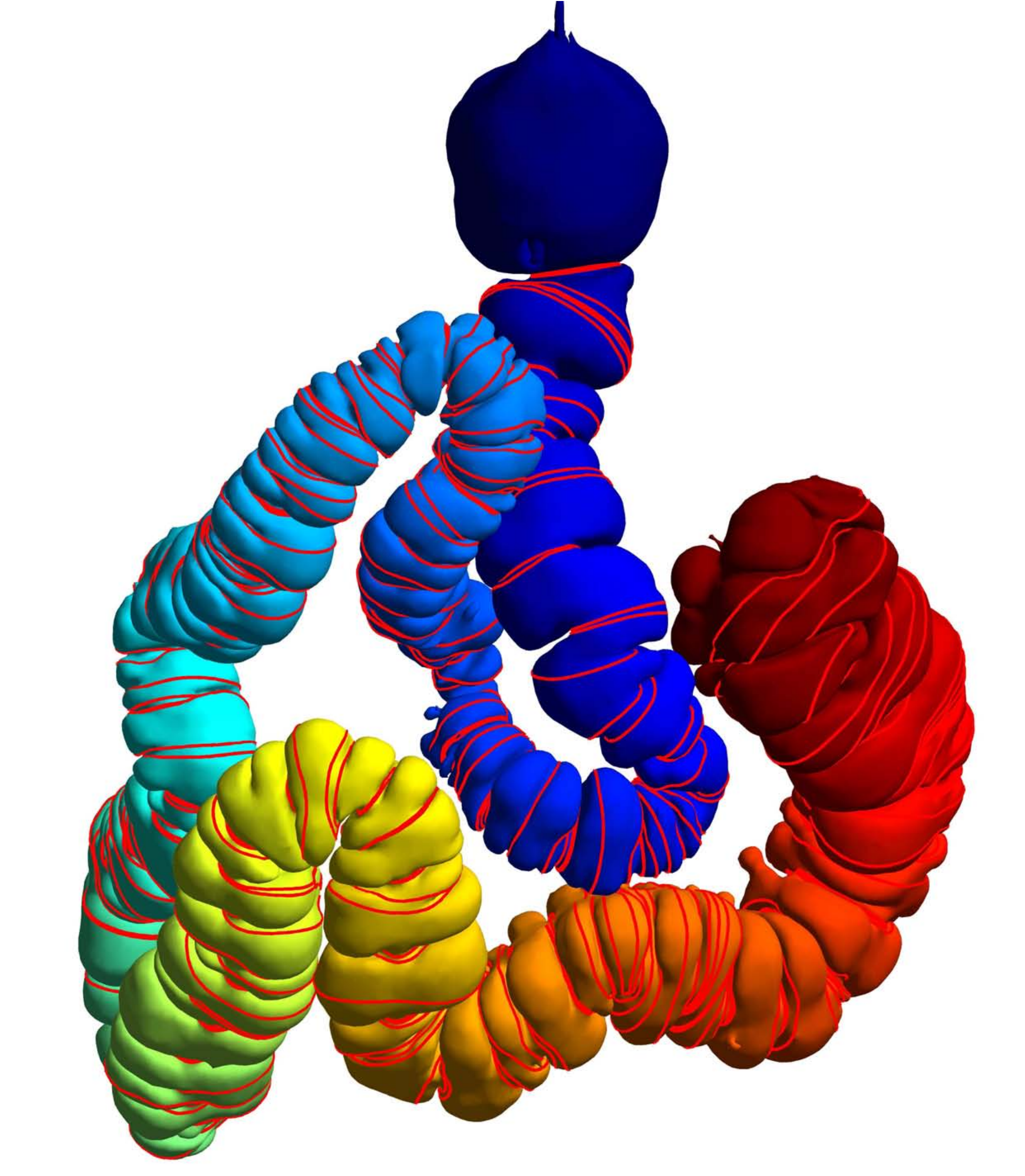} &
\includegraphics[width=0.16\textwidth,angle=180, origin=c]{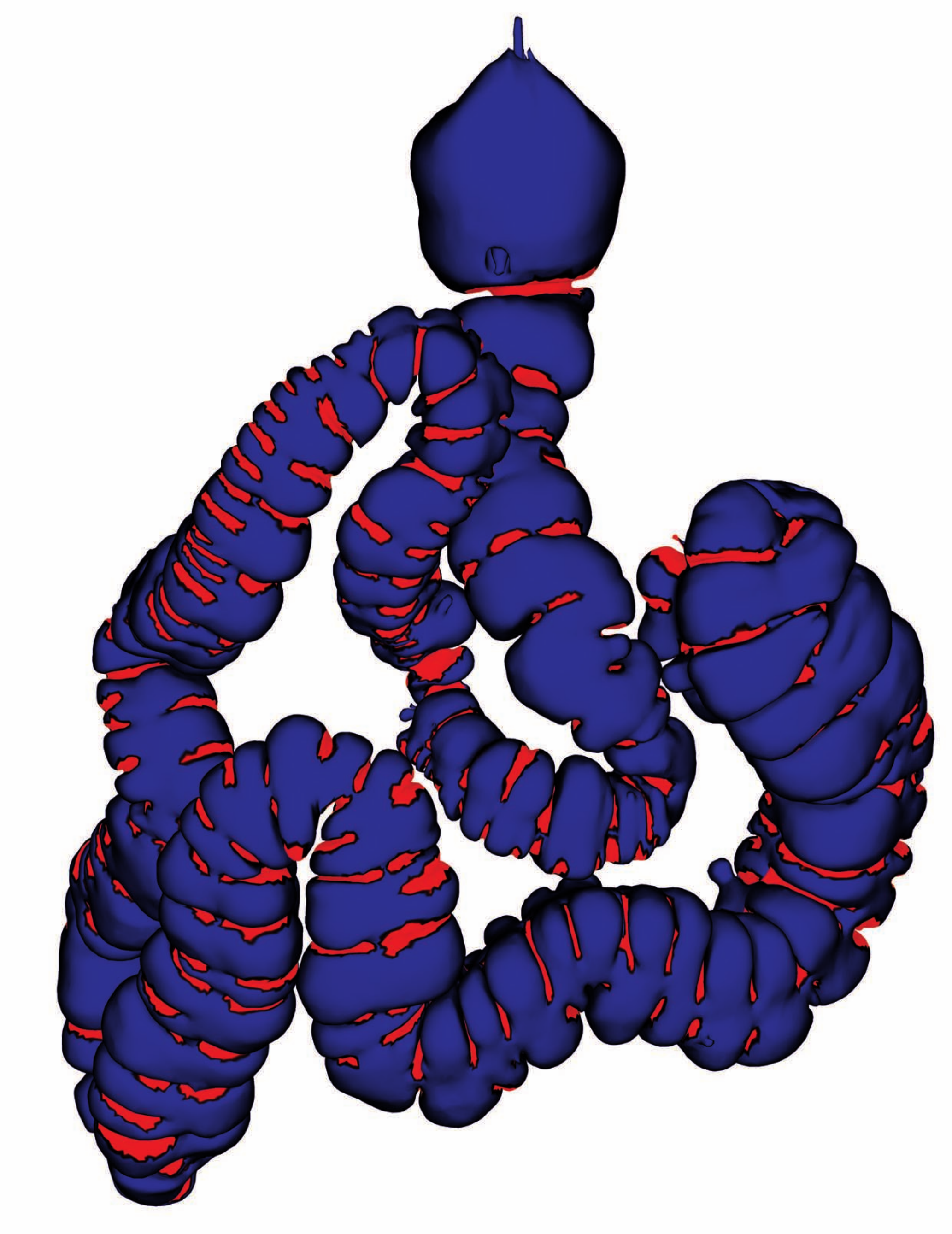} \\
(a) & (b) & (c) & (d) & (e)\\
\multicolumn{5}{c}{\includegraphics[width=0.95\textwidth]{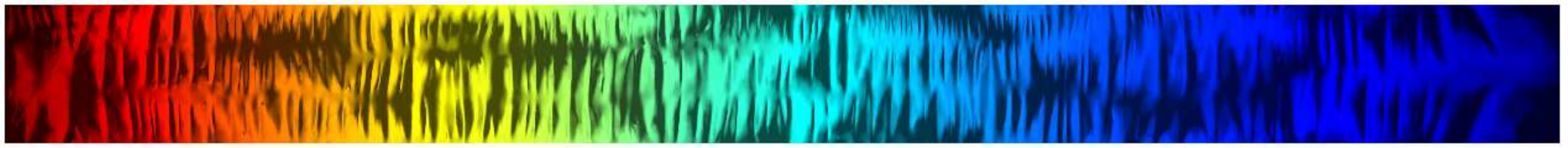}}\\
\multicolumn{5}{c}{(f) Flattened colon Fiedler vector representation}\\
\multicolumn{5}{c}{\includegraphics[width=0.952\textwidth]{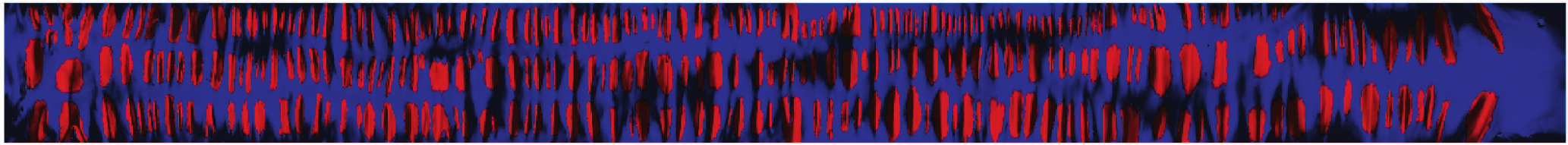}}\\
\multicolumn{5}{c}{(g) Flattened fold segmentation}
\end{tabular}
\end{center}
\vspace{-6mm}
\caption{Haustral fold detection and segmentation for a complete colon dataset. (a) Fiedler vector computed for a colon dataset. (b) Equi-sampled level sets computed based on (a). (c) Planes fitted to the level sets in (b) with the level sets projected to the planes. (d) Haustral fold contours detected based on the normalization of the planes in 2D. (e) Haustral fold segmentation based on the cross curves shown in Figure \ref{fig:fold_planes}. (f) Flattened Fiedler vector representation corresponding to the 3D view in (a). (g) Flattened haustral fold segmentation corresponding to the 3D view in (e).
\label{fig:colon_160}}
\vspace{-3mm}
\end{figure*}

In this section, we explain the computational algorithms for our registration method. We globally register the supine and prone colons based on their respective Fiedler vector representations. This is followed by bounded fold contour detection using 3D level set bundles computed using the Fiedler vector representation, and the eventual segmentation into individual folds based on the 2D normalization of the projected detected fold contour planes. These segmented folds are used as references on the globally registered colons to locally refine the registration and result in the final output of our registration algorithm.

\subsection{Fiedler Vector}

Suppose the colon surface is $(S,\mathbf{g})$, where $\mathbf{g}$ is the Riemannian metric. The colon surface is a topological cylinder  with two boundaries, $\partial S = \gamma_1 - \gamma_0$. We want to compute the Fiedler vector with the Dirichlet boundary condition
\[
    \left\{
    \begin{array}{lcl}
    \Delta_\mathbf{g} \xi &=& \lambda \xi, \lambda > 0\\
    \xi(p) &=& 1, p\in \gamma_1\\
    \xi(p) &=& 0, p\in \gamma_0
    \end{array}
    \right.
\]
where $\Delta_{\mathbf{g}}$ is the Laplace-Beltrami operator \cite{meyer:2003} induced by $\mathbf{g}$.

In the discrete setting, the colon surface is extracted from CT images, and approximated by a triangular mesh, denoted as $M=(V,E,F)$, where $V$ is the set of vertices, $E$ the set of edges, and $F$ the set of faces. The \emph{cotangent edge weight} is defined as follows: suppose $[v_i,v_j]\in E$ is an edge, shared by two triangular faces $[v_i,v_j,v_k]$ and $[v_j,v_i,v_l]$, then the edge weight is $w_{ij} = \cot \theta_{ij}^k + \cot \theta_{ji}^l$,
where $\theta_{ij}^k$ is the corner angle at vertex $v_k$ in the face $[v_i,v_j,v_k]$. Suppose a function is defined on the vertex, $f:V\to\mathbb{R}$. The \emph{discrete Laplace-Beltrami operator} is defined as $\Delta f(v_i) = \sum_{v_j\sim v_i} w_{ij}(f(v_j)-f(v_i))$, where $v_j\sim v_i$ means the vertex $v_j$ is adjacent to $v_i$. Therefore, the Laplace-Beltrami operator has a matrix representation $\Delta=(\delta_{ij})$, where
\[
    \delta_{ij} = \left\{
    \begin{array}{lr}
    -w_{ij}& v_i\sim v_j\\
    \sum_k w_{ik}& i=j\\
    0 & v_i\not\sim v_j
    \end{array}
    \right.
\]

We compute the eigen decomposition of $\Delta$; the eigenvalues are $\{\lambda_0,\lambda_1,\cdots, \lambda_n\}$ and the eigenvectors are $\{\eta_1,\eta_2,\cdots, \eta_n\}$. The first eigenvalue of $\Delta$ is $\lambda_0=0$ and the first eigenvector is $(1,1,\cdots,1)^T$. The first positive eigenvalue is $\lambda_1$ and the Fiedler vector is the corresponding eigenvector $\eta_1$. We scale $\eta_1$ such that the maximum value of $\eta_1$ is $1$ and the minimum value is $0$. In the following discussion, we treat $\eta_1$ as a function defined on the vertex set. Figure \ref{fig:supine0}(a) illustrates the Fiedler vector of a colon surface with color encoding, and Figure \ref{fig:supine0}(d) shows the Fiedler vector on the flattened colon surface.

\subsection{Global Registration}

Two colon surfaces can be registered directly based on their Fielder vectors. Suppose the two colon surfaces are $S_1$ and $S_2$, then the corresponding Fiedler vectors are $\xi_1$ and $\xi_2$, respectively. Furthermore, the boundaries of the two surfaces are $\partial S_k = \gamma_1^k - \gamma_0^k, k=1,2$. We first match the boundary curves using their arc lengths. We choose base points $p_1\in \gamma_0^1$ and $p_2\in \gamma_0^2$, where the base points are the extrema which are computed using the maximum and minimum Fiedler vector values. To avoid Fiedler vector flips between the corresponding supine and prone data, we manually make these maximum or minimum Fiedler vector values consistent on the rectum or the cecum for both the supine and prone datasets.  The arc length is then used to parameterize the boundary curves and normalize the total length to $2\pi$. The arc length parameterizations are denoted as $\theta_1$ and $\theta_2$. The mapping $\theta_1 \mapsto \theta_2$ gives the mapping from $\gamma_0^1$ to $\gamma_0^2$.

We compute the gradient field on the surfaces, denoted as $\nabla \xi_1$ and $\nabla \xi_2$ respectively, then we trace the integration curves of the gradient fields. The curve $\tau_1(t)$ represents an integration curve on $S_1$, $\tau_1(0)\in \gamma_0^1$ and $\tau_1(1)\in \gamma_1^1$. Similarly, $\tau_2(t)$ is an integration curve on $S_2$ starting from $\gamma_0^2$. A point $p\in S_1$ is the intersection of a level set of $\xi_1^{-1}(t), t\in [0,1]$ and an integration curve $\tau_1$ starting from a point on $\gamma_0^1$ with the arc length parameter $\theta_1$. The whole colon surface $S_1$ is parameterized by $(\theta_1,t)$, and $S_2$ is globally parameterized by $(\theta_2,t)$. The initial global registration is given by the mapping $\varphi:(\theta_1,t)\mapsto (\theta_2,t)$.

\subsection{Fold Detection}

Suppose $S$ is a colon surface with Fiedler vector $\xi$. We denote the level sets of $\xi$ as $\gamma_t$, where $t\in [0,1]$, namely $\gamma_t = \xi^{-1}(t)$. We uniformly sample the level sets on the colon surface and obtain a family of level sets $\{\gamma_t\}$, as shown on the colon surface in Figure \ref{fig:supine0}(b) and the flattened colon surface in Figure \ref{fig:supine0}(e). We also compute the integration curves of the gradient field of $\xi$ and obtain a family of integration curves $\{\tau_\theta\}$, $\theta \in [0, 2\pi)$, as shown in Figure \ref{fig:fold_planes}, where the red loops are the level sets $\{\gamma_t\}$ and the yellow curves are the integration curves $\{\tau_\theta\}$.

We compute the normal curvature of points $\tau_\theta(t)$. If a point $(\theta,t)$ is in a fold area, then the normal curvature of $\tau_\theta(t)$ is negative; in other flat or convex areas, the normal curvature of $\tau_\theta(t)$ is zero or positive. We compute the \emph{inflection points} of the integration curves, where the normal curvatures are $0$'s, and the points with minimal normal curvatures. Fixing an integration curvature $\tau_\theta$, suppose $t_0<t_1<t_2$, where $t_0$ and $t_2$ are inflection points, $t_1$ is the minimal curvature point, then the level sets $\gamma_t$, $t\in [t_0,t_2]$ form a level set bundle. In this way, we compute the clusters of level sets. We discard the initial uniformly sampled level sets, which do not belong to any bundles. The level set bundles are shown in Figure \ref{fig:colon_160}(d).

We densely sample the level sets within each bundle, and compute the normal curvatures. The points $(\theta,t)$ with negative normal curvature of curve $\tau_\theta$ are in the fold area. In this way, we can detect the fold contour, as shown in Figure \ref{fig:colon_160}(e).


Bowel preparation, done prior to VC, might lead to local under-distention of some colon regions, which we refer to as collapsed regions. In this case, we ignore the extraction of folds in these regions since these folds do not exhibit any meaningful information. We set a threshold on the level set bundle such that if the encompassing circle is below 0.1 in radius, when normalized in 2D, we mark that region as locally under-distended or collapsed, as shown in Figure \ref{fig:colon_102}. However, severe cases of local under-distension can lead to a complete collapse of the colon at a particular region, resulting in multiple colon segments during the segmentation process. Our method does not cater to these severe cases, which will be a focus of future research.

\subsection{Haustral Fold Segmentation}

\begin{figure}[t]
\begin{center}
\includegraphics[width=0.33\textwidth, height=0.21\textheight]{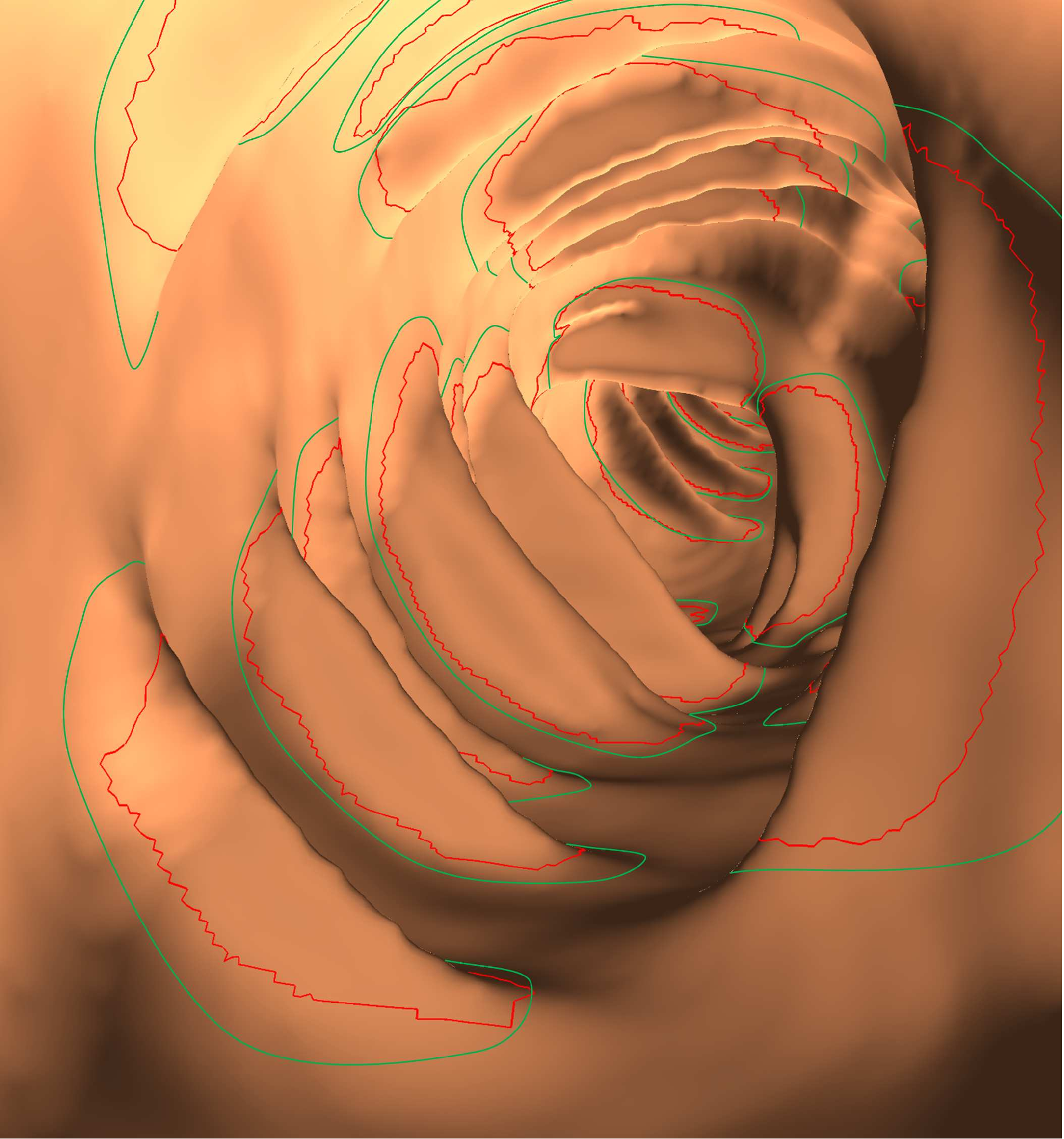}\\
\end{center}
\vspace{-4.5mm}
\caption{Endoluminal view of the haustral fold segmentation with red contours denoting the automatic segmentation and green contours denoting the manual segmentation.
\label{fig:endoluminal_folds}}
\end{figure}

For each level set $\gamma_t$, we find a best fit plane $\pi_t$ as follows. We choose samples on $\gamma_t$, denoted as $\{p_1,p_2,\cdots,p_n\}$, and the center of the samples is given by $c = 1/n \sum_{i=1}^n p_i$. We compute the covariance matrix of the samples, $A = 1/n \sum_{i=1}^n (p_i - c) \bigotimes (p_i-c)$, then compute the eigen decomposition of $A$. The eigenvectors are $\{\eta_1, \eta_2, \eta_3\}$, which form an orthonormal, then the fitting plane $\pi_t$ goes through the center $c$ and is spanned by $\eta_1$ and $\eta_2$, namely, $\pi_t$ is given by the equation $\langle p-c, \eta_3\rangle = 0$. The fitting planes of the level sets are illustrated in Figure \ref{fig:colon_160}(c).

We then project the level set $\gamma_t$ onto the fitting plane $\pi_t$. Suppose $p\in \gamma_t$, then the projection on the plane is given by $p - \langle p, \eta_3\rangle \eta_3$. We denote the projected level set as $\tilde{\gamma}_t$. Suppose $\pi_{t}$ and $\pi_s$ are two fitting planes, with centers $c_t$ and $c_s$, we align them together as follows: we shift $c_t$ to $c_s$ by a translation, then rotate $\pi_t$ to coincide with $\pi_s$ by a rotation $\mathcal{R}$, such that the rotation angle is the angle between the normals to the two planes and the rotation axis is along the cross product of the two normals. In this way, we align the fitting planes of all level sets within a bundle, and the spatial level set curves become a cluster of planar curves as shown in Figure \ref{fig:fold_planes}(c). We denote each projected and aligned level set as $\hat{\gamma}_t$.

An integration curve $\tau_\theta(t)$ on the original surface becomes a planar curve $\hat{\tau}_\theta(t)$ by mapping $\gamma_t(\theta)$ to $\hat{\gamma}_t(\theta)$. Suppose the level set bundle is $\{\gamma_t\}$, where $t_0\le t \le t_1$. We compute the total length of each $\hat{\tau}_\theta$ and find the local minima with respect to $\theta$. Then each local maximum corresponds to a haustral fold. This procedure produces the haustral fold segmentation, as shown in Figure \ref{fig:colon_160}(e) for the 3D view and Figure \ref{fig:colon_160}(g) for the flattened view. We compare our automatic haustral fold segmentation with manual segmentation as shown in Figures \ref{fig:endoluminal_folds} and \ref{fig:aut_man}, where the green contour shows the manual segmentation results and the red contours show the automatic results. One can see that the automatic segmentation has very high accuracy.

\subsection{Local Registration Refinement}

The folding areas and the segmented haustral folds are used as anatomical references. From the initial registration, we can find the correspondences among the haustral folds on colon surfaces obtained from the supine/prone/side data. We then locally deform the initial mapping in order to align the haustral folds more accurately. We denote the two flattened colon surfaces as $S_1$ and $S_2$ respectively, $\varphi: S_1\to S_2$ is the mapping between them.  We define the characteristic function $\chi_k: S_k\to\mathbb{R}$, $k=1,2$, where $\chi_k(p)$ equals $1$ if $p$ is inside a haustral fold, and $0$ otherwise. Then we smooth the characteristic functions out by convolving a Gaussian filter. We define an energy for the mapping $\varphi$ as follows:
\[
    E(\varphi) = \int_{S_1} | \chi_1(p) - \chi_2\circ\varphi(p) |^2 dA + \int_{S_1} |\nabla \varphi|^2 dA.
\]
The first term ensures that haustral folds match haustral folds, and the second term measures the smoothness of the mapping.  We obtain the Euler-Lagrange equation, and the following flow minimizes the energy,
\[
    \frac{\partial \varphi(p,t)}{\partial t} = -(\chi_1(p)-\chi_2\circ\varphi(p)) \nabla \chi_2(\varphi(p)) + \Delta \varphi(p).
\]
By deforming the mapping along the flow, the registration is improved significantly. Figure \ref{fig:registration_compare} shows the registration result using the haustral folds as references.

\begin{figure}[t!]
\begin{center}
\begin{tabular}{c}
\includegraphics[width=0.40\textwidth,height=0.13\textheight]{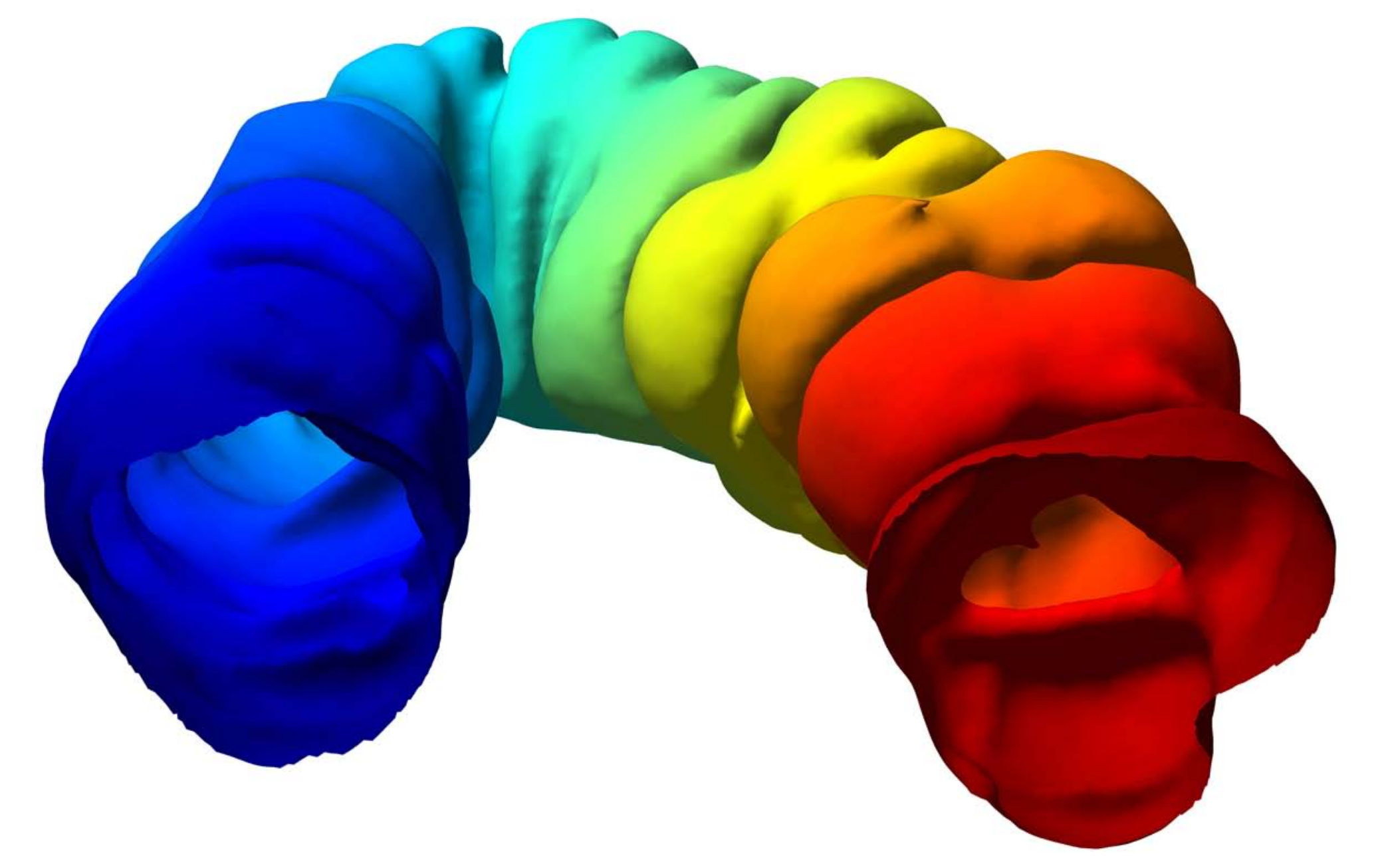}\vspace{-1mm}\\
(a) Fiedler vector on colon segment\vspace{1.5mm}\\
\includegraphics[width=0.46\textwidth]{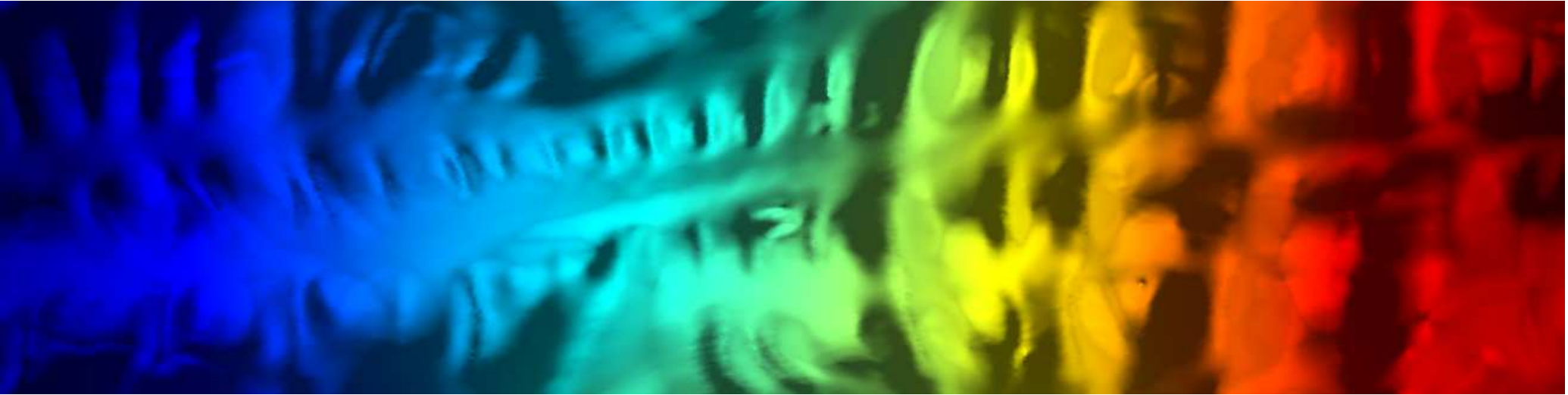}\\
(b) Flattened colon segment with a cut via geodesic path\vspace{1.5mm}\\
\includegraphics[width=0.46\textwidth]{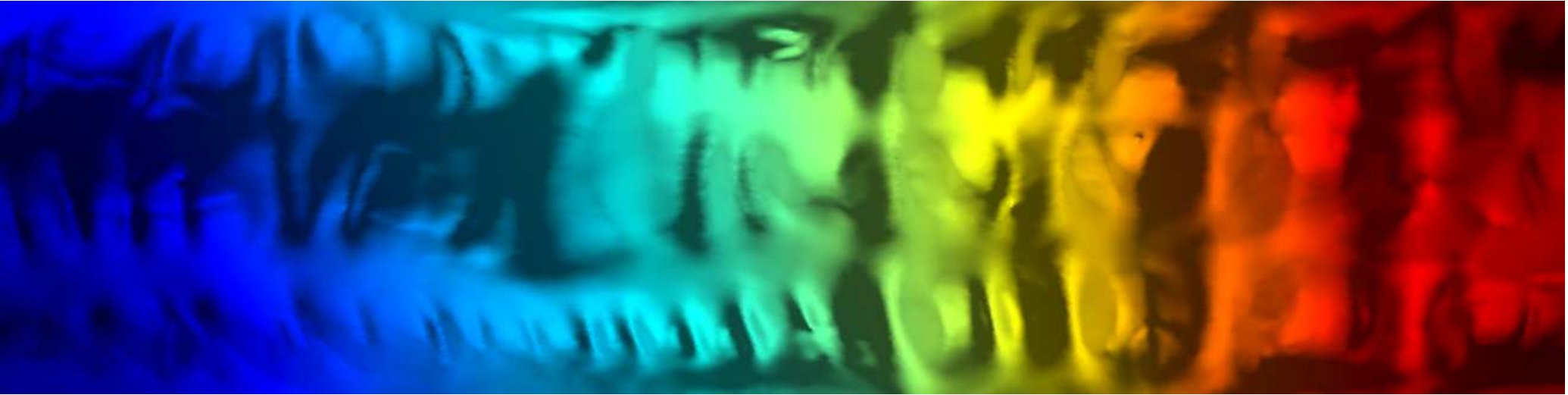}\\
(c) Flattened colon segment with a cut via our fold segmentation\\
\end{tabular}
\end{center}
\vspace{-5mm}
\caption{Consistent cuts. (a) Fiedler vector representation on a colon segment. (b) For two given points on the boundaries, a geodesic path cuts through the folds and chops the folds on the boundaries while flattening. (c) Due to our accurate segmentation, we can trace consistent cuts automatically and keep the haustral folds intact during flattening.
\label{fig:automatic_cut}}
\end{figure}


\section{Visualization}
\label{sec:vis}

Due to the use of the Fiedler vector representation throughout the entire colon, it becomes possible to easily co-locate positions between two scans, either manually or automatically.  Using consistent cuts along the colon based on the segmented folds, we achieve more effective flattened visualizations than other works.  The Fiedler vector representation also provides easier visualizations to find correspondences between the 3D model and the flattened view while allowing for better polyp localization via labeling of segmented folds.

\begin{figure}[t!]
\vspace{-1mm}
\begin{center}
\begin{tabular}{cc}
\includegraphics[width=0.165\textwidth]{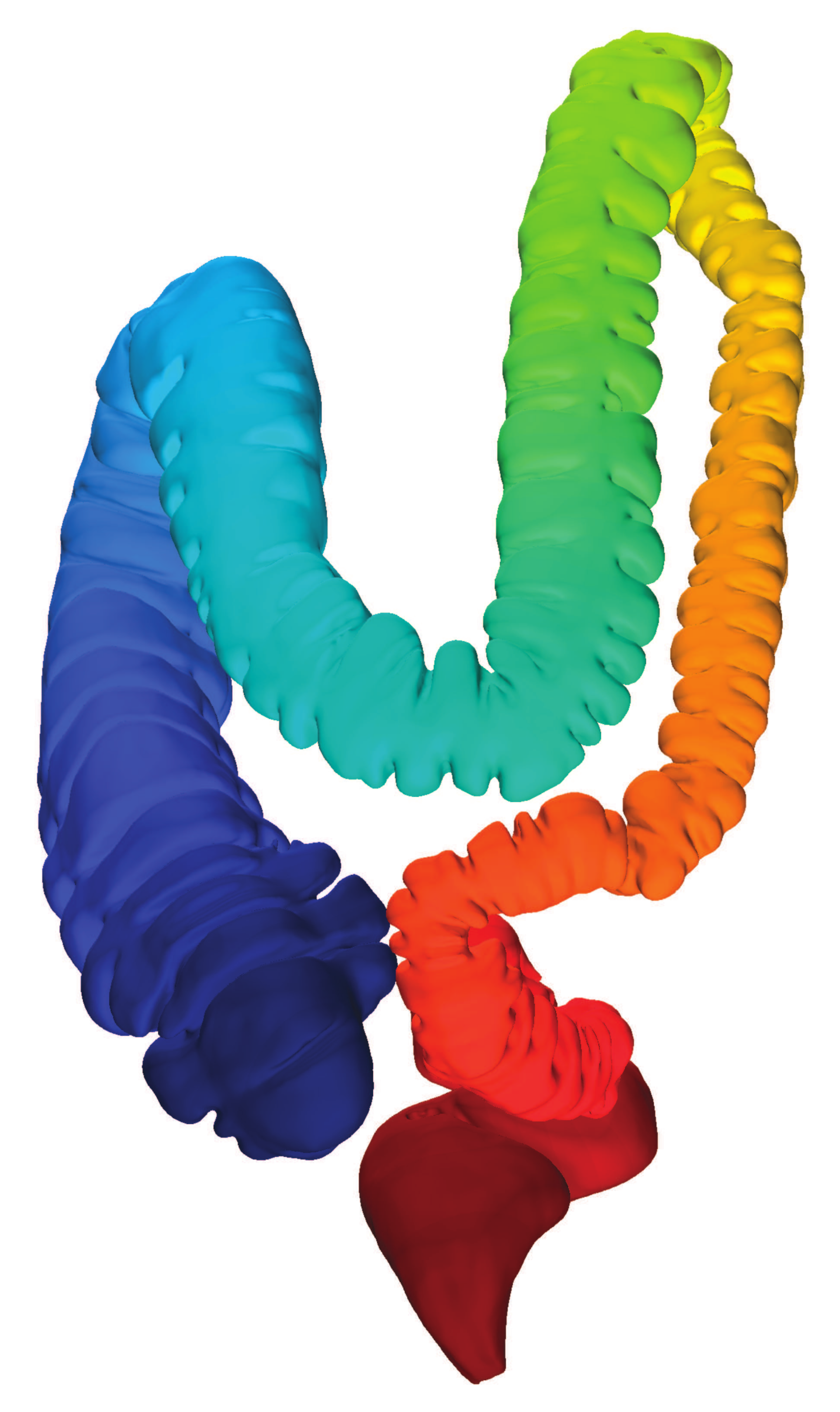}&
\includegraphics[width=0.165\textwidth]{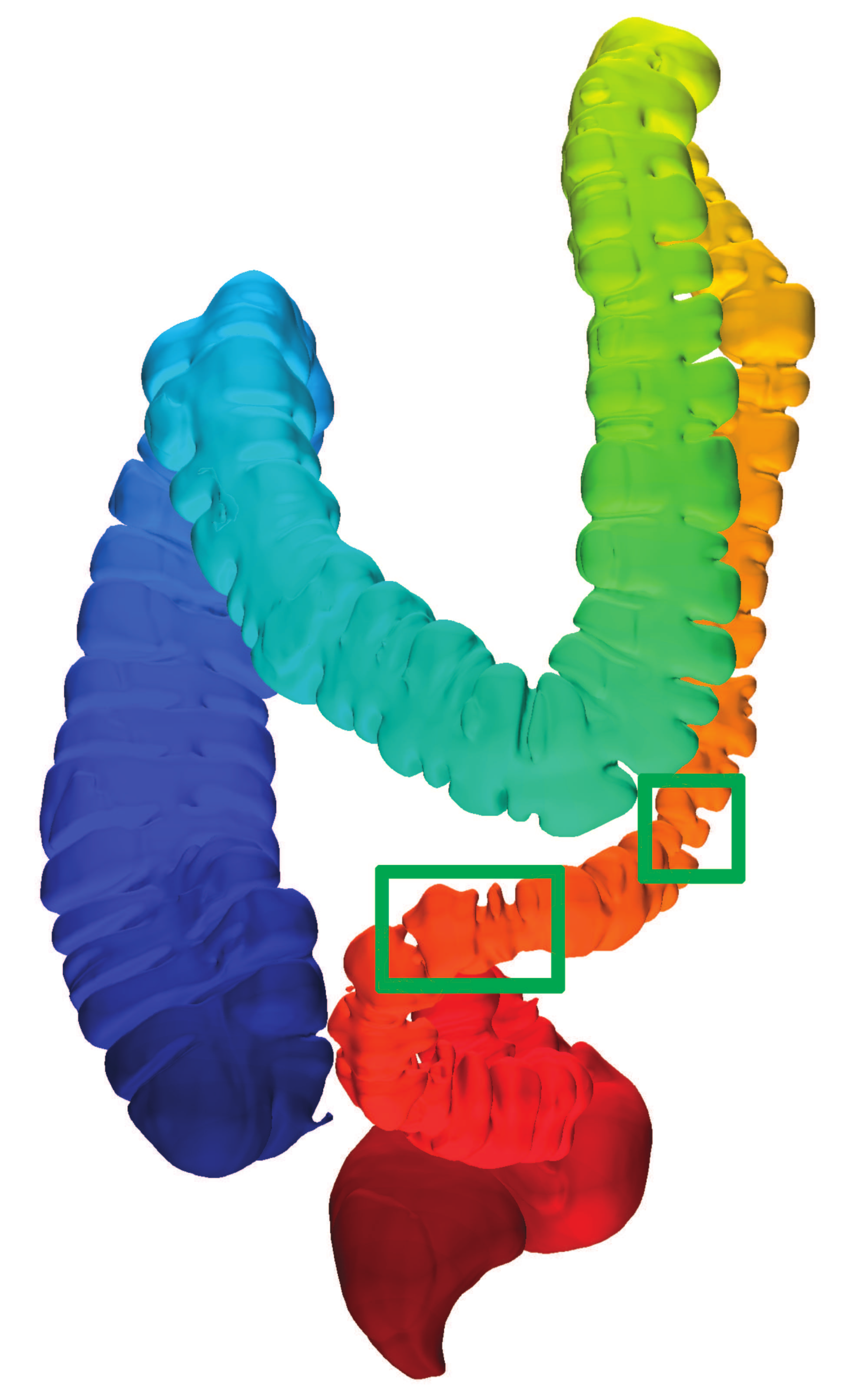}\\
(a) Supine & (b) Prone\\
\includegraphics[width=0.155\textwidth]{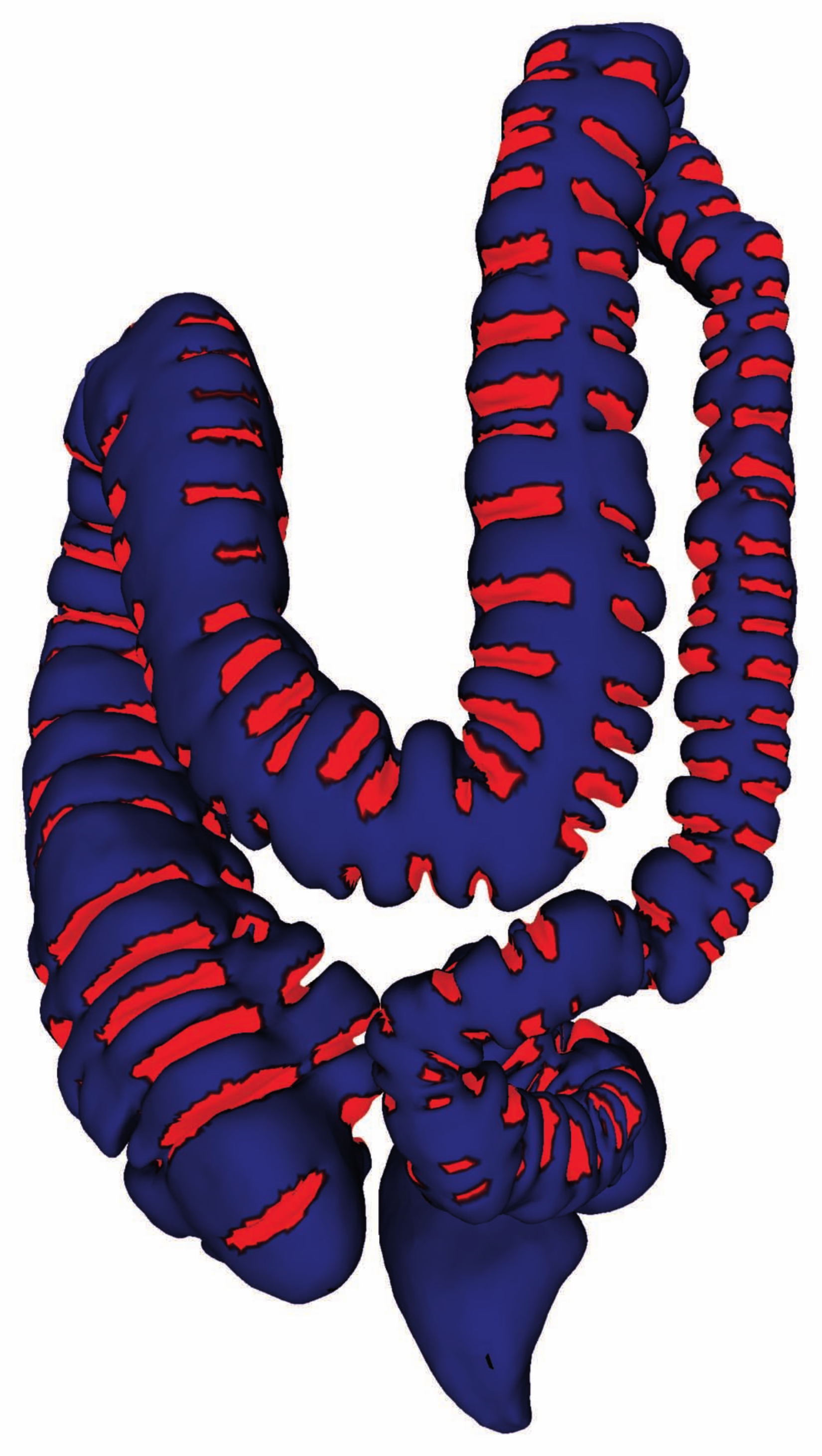}&
\includegraphics[width=0.155\textwidth]{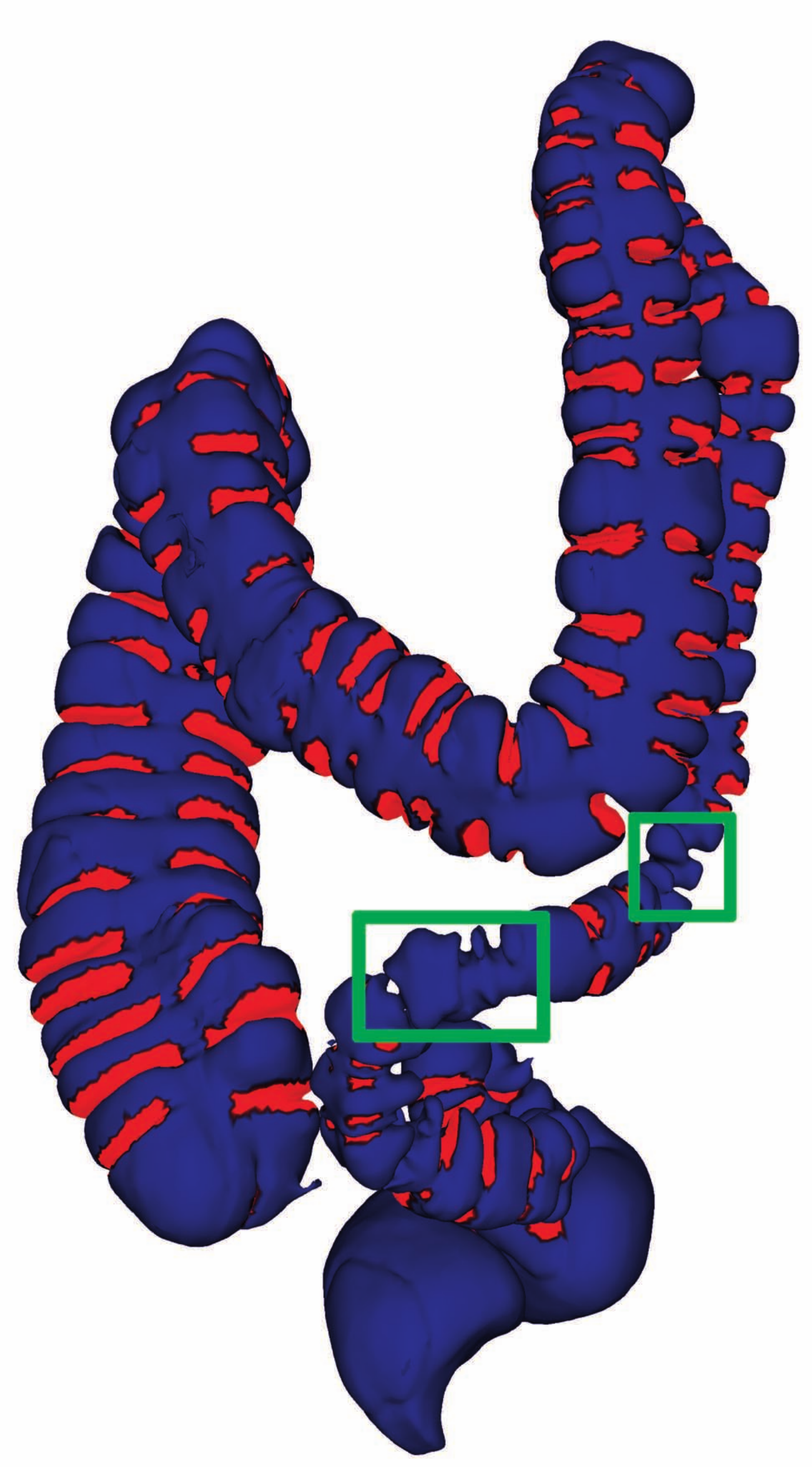}\\
(c) Supine & (d) Prone\\
\end{tabular}
\end{center}
\vspace{-5mm}
\caption{Fiedler vector computation for (a) supine and (b) prone colons and the corresponding (c) and (d) fold segmentation results. The green boxes in (b) and (d) show the collapsed regions in the prone dataset.
\label{fig:colon_102}}
\end{figure}

\subsection{Corresponding Supine and Prone Visualization}
The Fiedler vector representation can be used to automatically visualize corresponding regions in multiple patient orientations and in respective 3D and 2D mapped views. The consistent endoluminal views in Figures \ref{fig:teaser} and \ref{fig:supineside} highlight the polyps in supine-prone and supine-side orientations, respectively. In order to visualize both supine and prone in a consistent manner, we map the camera frustum viewing the supine mucosa (inner surface) to the camera frustum viewing the prone mucosa. This is done by computing the means of the level sets based on the Fiedler vector which result in corresponding centerlines on both supine and prone. We then match the camera orthonormal coordinate frames for the two views.

The centerlines are matched based on the eigenfunction bundles following the registration algorithm, outlined in Section \ref{sec:algorithm}. Given a point $c_0$ on the discretized supine centerline, we want to find the corresponding point $c_1$ on the discretized prone centerline. We first find the point $d_0$ on the supine centerline, which is the point nearest $c_0$ in $\mathbb{R}^3$. Given this point $d_0$, the corresponding point $d_1$ on the prone centerline is found at the same index.  $c_1$ can then be found as the point on the prone centerline closest in $\mathbb{R}^3$ to $d_1$.

We focus our endoluminal correlation work on situations where the user is looking at something on the colon wall and wants to view the corresponding location in the other scan.  Using the correspondence along the skeletons, it is possible to create a correlated automatic navigation, though we found running two fly-throughs side-by-side to be distracting and more of a hindrance than help. Orienting rotation around the centerline for two automatic navigation views is possible based on the individual haustral fold registration on the corresponding supine and prone colons.

\subsection{Flattened Visualizations}
Since the Fiedler vector representation captures extreme points on a given colon dataset, we can use these extreme points along with the fold segmentation results to trace a consistent cut throughout the colon. This cut can then be used as input to a quasi-conformal mapping algorithm \cite{zeng2010supine} to flatten the colon to a 2D plane. In essence, the following steps are used to automatically extract a consistent cut: (1) extract the extreme points on the colon dataset, (2) remove the 1-ring vertex set corresponding to the extreme vertices, (3) pick two haustral folds (in the same level set bundle) closest to one of the extreme points, (4) connect the mean of the endpoints of these folds to the next level set bundle of haustral fold endpoints, closest to the centroids of the folds in the previous bundle, and (5) repeat the previous step until the other extreme point is reached. The result of this process, compared to a typical cut based on a geodesic path, is illustrated in Figure \ref{fig:automatic_cut}.

To the best of our knowledge, there are no automatic algorithms for extracting a consistent cut throughout the colon. Normally, two extreme points are manually picked on a given colon dataset and a geodesic path is computed from one point to the other and this constitutes the cut. However, the problem with this cut is that it crosses the folds when a sharp bend is incurred, as shown in Figure \ref{fig:automatic_cut}, whereas our method takes into account individual haustral folds before tracing the cut which allows for consistency.

In addition to greater consistency, our cut produces less angle distortion as compared to the geodesic path cut. We quantify the angle distortion metric of the resultant flattened colon by using the signed singular values of the Jacobian of the transformation for each triangle \cite{liu:2008local}. Small angular distortion is indicated by a distortion value approaching 2. The colon flattened via geodesic path cut in Figure \ref{fig:automatic_cut}(b) produces an overall angle distortion value of 2.181, whereas our method in Figure \ref{fig:automatic_cut}(c) produces a value of 2.126. 
\section{Experimental Results}
\label{sec:experiments}

We have validated our algorithms using real VC colon data from the publicly available National Institute of Biomedical Imaging and Bioengineering (NIBIB) Image and Clinical Data Repository provided by the National Institute of Health (NIH). We perform electronic colon cleansing incorporating the partial volume effect \cite{wang:2006:tbme}, segmentation with topological simplification \cite{hong:2006:tvcg}, and reconstruction of the colon surface as a triangular mesh via surface nets \cite{Gibson98SurfaceNets} on the original CT images in a pre-processing step. Though the size and resolution of each CT volume varies between clinical datasets, the general data size is approximately $512 \times 512 \times 450$ voxels with a resolution of approximately $0.7 \times 0.7 \times 1.0$ mm.

We have developed our algorithms using generic C++ on the Windows 7 platform. The linear systems for the Laplace equation were solved using the Matlab C++ library. All of the experiments are conducted on a workstation with a Core 2 Quad 2.50GHz CPU with 4GB RAM. On average, the Fiedler vector computation takes 3.5 seconds, the haustral fold detection and segmentation takes 3.2 seconds, and the final local refinement takes 1.8 seconds.

\begin{figure*}[ht!]
\begin{center}
\begin{tabular}{c}
\vspace{-1.5mm}\includegraphics[width=0.95\textwidth]{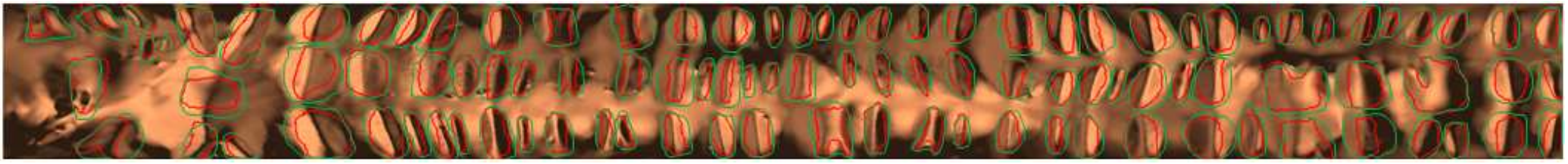}\\
\vspace{-1.5mm}\includegraphics[width=0.95\textwidth]{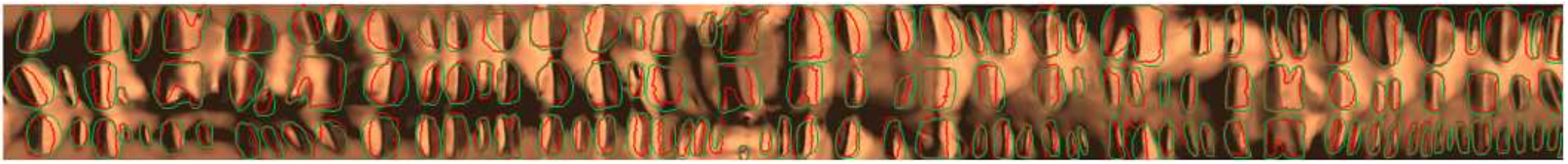}\\
\vspace{-7mm}\includegraphics[width=0.95\textwidth]{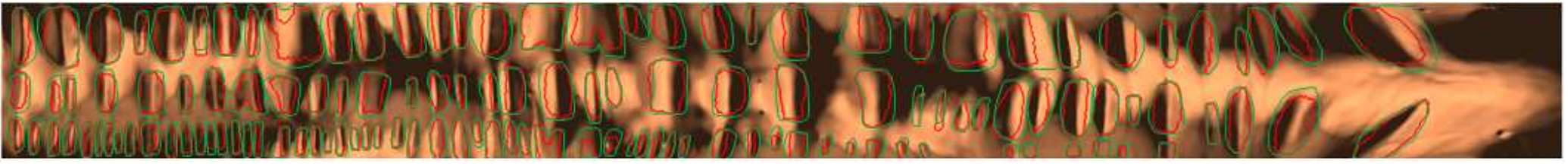}\\
\end{tabular}
\end{center}
\caption{Automatic fold segmentation (red contours) and manual fold segmentation (green contours) of a complete colon.
\label{fig:aut_man}}
\vspace{-4mm}
\end{figure*}

\begin{table}[!t]
\setlength{\tabcolsep}{5pt}
\centering
\caption{Quantitative results from the 20 patient scans. Columns 2--6 indicate for each scan the number of ground truth folds, sensitivity (\# true positives divided by \# ground truth positives), false negatives, false positives, and the segmented-area ratio. The last row denotes the averages by scan and standard deviations of columns 2-6.}
\label{tab:folds}
\vspace{-3mm}
\begin{tabular*}{0.48\textwidth}{lccccc}
\hline
\hline
Scan ID & \# True & Sensitivity & \#FNs & \#FPs & $SAR$ \\
\hline
\hline
    1-Supine   &   285    &    0.94    &  18   &  0   & 0.82 \\
    1-Side   &   272    &    0.94    &  16   &  1  & 0.92 \\
    2-Supine   &   191    &    0.86    &  27   &  0   & 0.87 \\
    2-Side   &   169   &    0.83   &  28   &  0   & 0.86 \\
    3-Supine   &   293    &    0.99    &  2   &  0   & 0.93 \\
    3-Prone   &   268    &    0.97   &  7   &  1   & 0.81 \\
    4-Supine   &   274    &    0.93    &  18   &  0   & 0.91 \\
    4-Prone   &   251    &    0.90    &   24   &  0   & 0.83 \\
    5-Supine   &   165    &    0.84    &  26   &  1   & 0.86 \\
    5-Prone  &   152    &    0.82   &  27   &  0   & 0.76 \\
    6-Supine  &   223    &    0.94    &   13   &  0   &  0.86 \\
    6-Prone  &   202    &    0.88   &  25  &  1   & 0.76 \\
    7-Supine  &   196    &    0.88    &  24   &  1   & 0.91 \\
    7-Prone  &   161    &    0.94   &  9   &  0   & 0.78 \\
    8-Supine  &   247    &    0.97    &  8   &  0   & 0.94 \\
    8-Prone  &   210    &    0.91   &   19  &  0   & 0.86 \\
    9-Supine  &   187    &    0.82    &  33   &  1   & 0.84 \\
    9-Prone  &   158    &    0.91   &  15  &  0  & 0.75 \\
    10-Supine  &   173    &    0.87    &  22   &  0   & 0.82 \\
    10-Prone  &   142    &    0.92   &  11   &  0   & 0.89 \\
\hline
\hline
Average & 211.0 $\pm$ & 0.90 $\pm$ & 18.6 $\pm$ & 0.30 $\pm$  & 0.85 $\pm$\\
        &   49.4    & 0.05     & 8.36    & 0.47    &   0.06  \\

\hline
\hline
\end{tabular*}
\end{table}

\subsection{Fold Detection and Segmentation Evaluation}
\label{subsec:folds_eval}

To evaluate our fold segmentation algorithm, we manually segmented the folds on 10 supine-prone and supine-side colon pair datasets. These manual segmentations were approved by a VC-trained radiologist who reviewed them. The accuracy of the proposed segmentation algorithm was measured on a per-fold basis, using the $SAR$ metric \cite{zhu2013haustral}. Given the manually-established ground truth, a fold was assumed to be successfully detected if more than 50\% of its area has been detected. Hence, we can calculate the true positives (TP), false positives (FP), true negatives (TN), false negatives (FN), sensitivity ($=TP/(TP+FN)$), and specificity ($=TN/(FP+TN)$). We calculate the $SAR$ as follows:\vspace{-2mm}
\[
    SAR = A_0 / (A_t + A_d - A_0)
\vspace{-2mm}\]
where $A_t$ is the ground truth area of the fold. $A_d$ is the area of the segmented fold, and $A_0$ is the area of the overlap of the above two areas. $SAR$ is defined as the ratio between the area of the intersection and the area of the union of the expert-drawn folds and the automatically segmented folds. In essence, a larger $SAR$ suggests a better segmentation result, and $SAR = 1$ indicates that the segmented fold perfectly matches the ground truth without any over- or under-segmentation.

Table \ref{tab:folds} lists the quantitative results from the 20 patient scans. About 90.4\% of all the ground truth folds were detected with approximately 18 missed per dataset, under the assumption that a fold would be detected if more than 50\% of its area has been segmented. The missed folds were mostly shallow and flat shaped structures. The FP detections were very few for each scan. For all the detected folds, the $SAR$ is approximately 84.8\%, indicating that the automatically segmented fold boundaries have matched fairly well with those of the manually-drawn counterparts. A visual comparison of the detection accuracy of our algorithm with a previous fold detection method using heat diffusion and fuzzy C-means clustering is provided as part of the Appendix A.2.

\begin{figure*}[ht!]
\begin{center}
\begin{tabular}{c}
\vspace{-0.5mm}\includegraphics[width=0.945\textwidth]{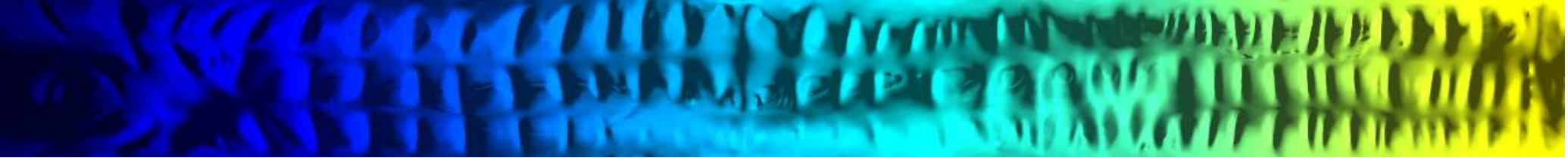}\\
\includegraphics[width=0.945\textwidth]{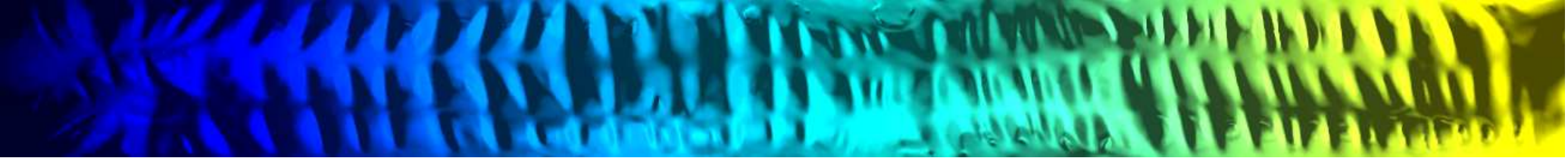}\vspace{-0.5mm}\\
(a) Unregistered supine and prone colon section 1\vspace{0.5mm}\\
\vspace{-0.5mm}\includegraphics[width=0.945\textwidth]{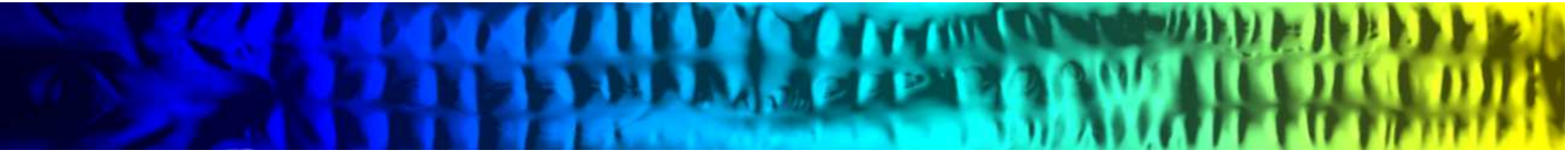}\\
\includegraphics[width=0.945\textwidth]{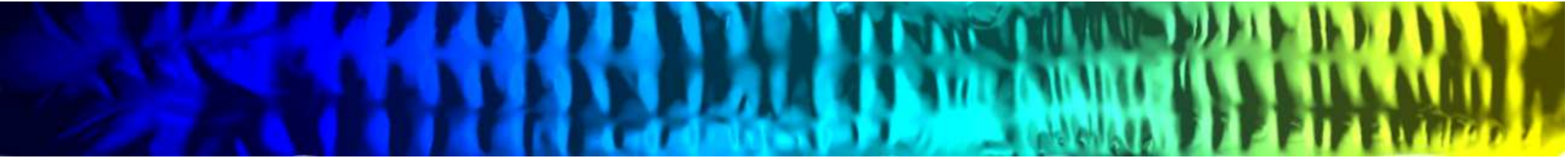}\vspace{-0.5mm}\\
(b) Registered supine and prone section 1\vspace{0.5mm}\\
\vspace{-0.5mm}\includegraphics[width=0.945\textwidth]{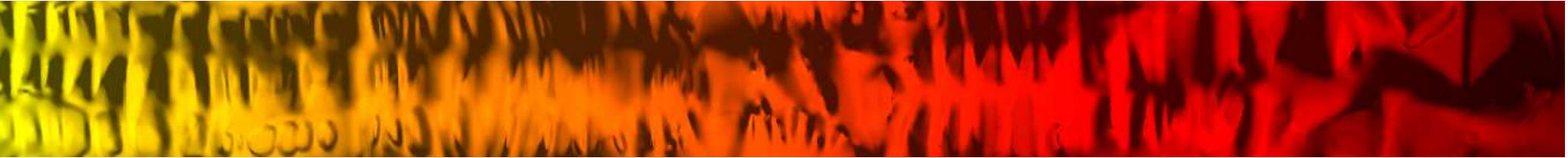}\vspace{0mm}\\
\vspace{0mm}\includegraphics[width=0.945\textwidth]{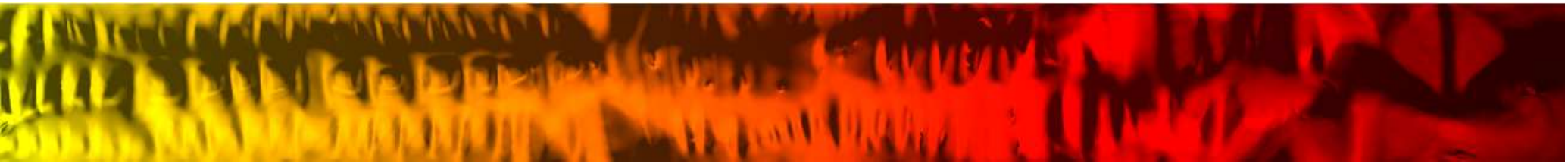}\vspace{-0.5mm}\\
(c) Unregistered supine and prone section 2\vspace{0.5mm}\\
\vspace{-0.5mm}\includegraphics[width=0.945\textwidth]{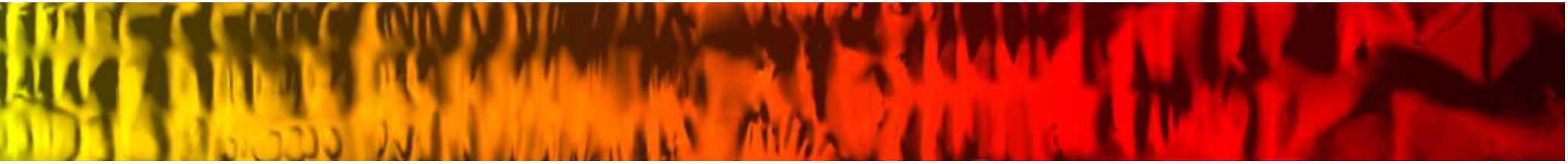}\vspace{-0.5mm}\\
\vspace{-0.5mm}\includegraphics[width=0.945\textwidth]{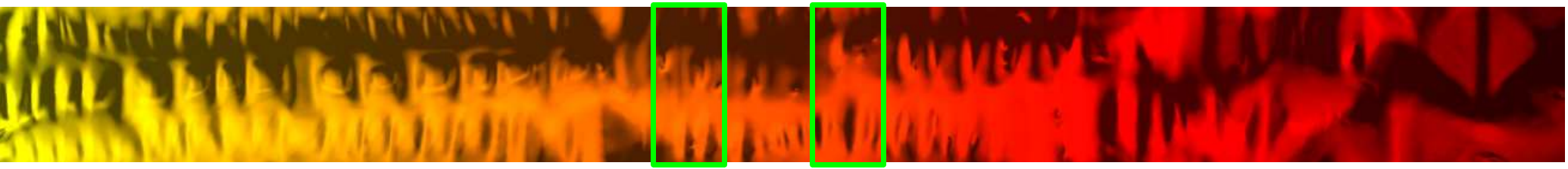}\vspace{-0.5mm}\\
(d) Registered supine and prone section 2\\
\end{tabular}
\end{center}
\vspace{-6.1mm}
\caption{Flattened visual registration verification of the supine and prone colons in Figure \ref{fig:colon_102}. (a) and (c) The unregistered supine and prone sections. (b) and (d) The registered supine and prone sections. The green boxes in (d) show the collapsed regions corresponding to the collapsed regions highlighted in Figures \ref{fig:colon_102}(b) and \ref{fig:colon_102}(d).
\label{fig:registration_compare}}
\vspace{-2.8mm}
\end{figure*}

\subsection{Analytic Registration Evaluation}
\label{subsec:analytic}

To evaluate our registration results, we use the distance measurement between corresponding features located on registered colon surfaces. We use the haustral fold segmentation algorithm, detailed in Section \ref{sec:algorithm}, to find corresponding anatomical references on a given supine-prone colon pair. Based on these computed references and our manual labels, we use a subset of the segmented features for our registration and measure the distance errors on the remaining features. We generally extract more than 100 features from both pairs. For this registration error evaluation, we used 75 feature points for our registration and measure the distance errors on the remaining features.

A comparison between our method and other methods is performed using our analytic evaluation results in $\mathbb{R}^3$.  For those papers that present their distance error, we compare our results with their results in Table \ref{tab:compare}. Our method produces a registration with significantly smaller distance error between corresponding points than other methods. When comparing to the published algorithm of Lai et al. \cite{lai:2010:intra}, 3 out of our 10 colon pair datasets had flips and needed to be made consistent manually in order for their algorithm to work. Moreover, in 6 out of our 10 colon pair datasets, the Lai et al. algorithm performed worse than our global registration baseline ($\varepsilon=0$ in their case) due to the incorrect detection of the landmarks.  Further details about issues we encountered with their algorithm can be found in Appendix A.1.

\begin{table}[t]
\setlength{\tabcolsep}{2.6pt}
\centering
\caption{Comparison of average distance error between our Fiedler vector approach (in bold) and other registration methods.}
\label{tab:compare}
\vspace{-3mm}
\begin{tabular*}{0.455\textwidth}{lr}
\hline
\it{Method}&\it{Dist. Error}\rule{0pt}{1.05em}\\
\hline
\hline
\bf{Fiedler Vector Representation $+$ Fold matching} & \bf{5.24 mm} \rule{0pt}{1.05em}\\
\hline
Quasi-conformal mapping \cite{zeng2010supine} & 7.85 mm \rule{0pt}{1.05em}\\
\hline
\bf{Fiedler Vector Representation} & \bf{11.98 mm} \rule{0pt}{1.05em}\\
\hline
Centerline registration + statistical analysis \cite{li:2004:medphys} & 12.66 mm \rule{0pt}{1.05em}\\
\hline
Linear stretching/shrinking of centerline \cite{acar:2001:embs} & 13.20 mm \rule{0pt}{1.05em}\\
\hline
Centerline feature match + lumen deformation \cite{suh:2009:jcat} & 13.77 mm \rule{0pt}{1.05em}\\
\hline
Fiedler vector + piecewise registration \cite{lai:2010:intra} & 14.19 mm \rule{0pt}{1.05em}\\
\hline
Centerline point correlation \cite{devries:2006:bjr} & 20.00 mm \rule{0pt}{1.05em}\\
\hline
Taeniae coli correlation \cite{huang:2005:vis} & 23.33 mm \rule{0pt}{1.05em}\\
\hline
\end{tabular*}
\end{table} 

\subsection{Visual Registration Verification}
\label{sec:visual_verification}

We provide a visual registration verification of our algorithm by flattening the entire supine and prone colon models and then using the segmented folds (see Figure \ref{fig:colon_102}) as anatomical references to align the supine and prone datasets. In Figure \ref{fig:registration_compare}, we show the entirety of both the supine and prone colon models, mapped to the plane, both unregistered and registered. The images of the registered segments clearly show very good alignment of the supine and prone colon structures, whereas the unregistered segments show poor alignment. Compared to previous methods, our algorithm provides a visually superior result for viewing corresponding locations on the two colon models.

We have also shown our results to a VC-trained radiologist. Due to the use of the Fiedler vector color mapping on the colon, he was able to easily find the corresponding regions across supine and prone colons along with a strong correlation between the 3D endoluminal views and the flattened views. 
\section{Conclusions and Future Work}
\label{sec:conc}

Shape registration is fundamental for shape analysis problems, especially for abnormality detection in medical applications. We have introduced an efficient framework for the registration of supine and prone colons, through the use of Fiedler vector representation, to improve the accuracy of polyp detection. Specifically, we use the Fiedler vector representation to globally register the supine and prone colon models. We then use level sets computed based on the Fiedler vector representation to detect and segment folds which are in turn used as anatomical references to locally refine the global registration results. The use of the Fiedler vector representation can help in easily visualizing the corresponding regions on 3D and 2D mappings as well as across supine and prone models. We have also proven the hot spots conjecture for modeling cylindrical topology using Fiedler vector representation, which allows our approach to be used for general cylindrical modeling and feature extraction. Furthermore, the fold segmentation allows for more consistent cuts along the colon surface which in turn provides more accurate flattened visualizations.

We have provided a thorough evaluation of our fold segmentation results by using the $SAR$ metric on 20 manually labeled datasets (10 multi-orientation colon pairs). We have compared our method with other registration algorithms based on the computed registration error metric and have found our method to provide a significant improvement. Finally, we have also provided visual verification of our results on complete supine and prone colon pairs.

In the future, we will leverage the Fiedler vector level set approach for polyp detection in a given dataset and create a more integrated framework for computer-aided detection based on our registration results. It has been shown in previous works that the performance and accuracy of the computer-aided detection can be increased dramatically if the detection of polyps is done separately on haustral folds and endoluminal walls. We will use our fold segmentation framework to build this dichotomy and to help improve the accuracy of the CAD algorithms. We will also deploy our method in longitudinal analysis to register colons for the same patient across multiple visits, rather than just for single patient visits. Moreover, in the future, we will also cater to the severe under-distended colon cases where the segmentation can result in more than one colon segment per patient dataset. These cases can vary in complexity from a simpler (single supine/prone colon segment being registered to multiple supine/prone colon segments) to a more cumbersome (multiple supine/prone colon segments being registered to multiple supine/prone colon segments) scenario.

\acknowledgments{The datasets are courtesy of Stony Brook University Hospital (SBUH) and Dr. Richard Choi, Walter Reed Army Medical Center. We would like to thank Dr. Kevin Baker of SBUH for his help in this project. This work has been partially supported by NSF grants DMS-1418255, CNS-0959979, IIP-1069147,  CNS-1302246, and the Marcus Foundation.
}

\begin{figure*}[t!]
\begin{center}
\begin{tabular}{c}
\includegraphics[width=0.98\textwidth]{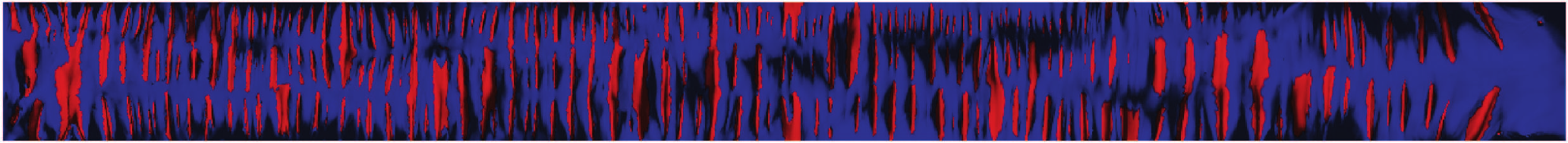}\\
(a) Fold detection algorithm using heat diffusion and fuzzy C-means clustering \cite{chowdhury:2010:colonic}\\
\includegraphics[width=0.98\textwidth]{figures/folds_seg/colon_160_S_simp_flatten_folds.pdf}\\
(b) Our fold detection and segmentation algorithm
\end{tabular}
\end{center}
\vspace{-6mm}
\caption{Fold detection evaluation on a flattened colon segment. (a) Fold detection algorithm using heat diffusion and fuzzy C-means clustering \cite{chowdhury:2010:colonic} and (b) our fold detection algorithm.
\label{fig:fold_detection_eval}}
\end{figure*}

\begin{figure}[t!]
\begin{center}
\begin{tabular}{cc}
\includegraphics[width=0.15\textwidth]{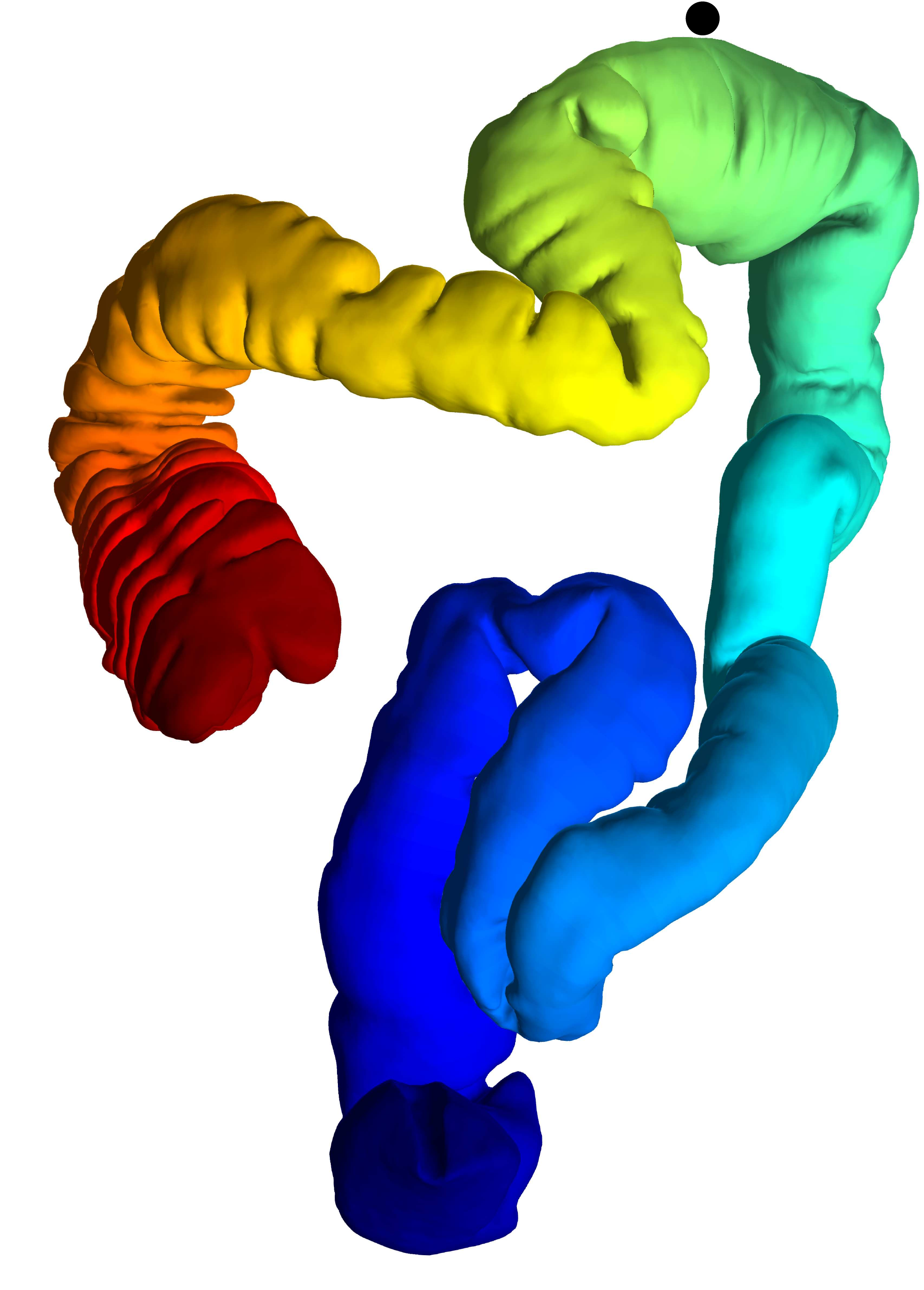}&
\includegraphics[width=0.17\textwidth]{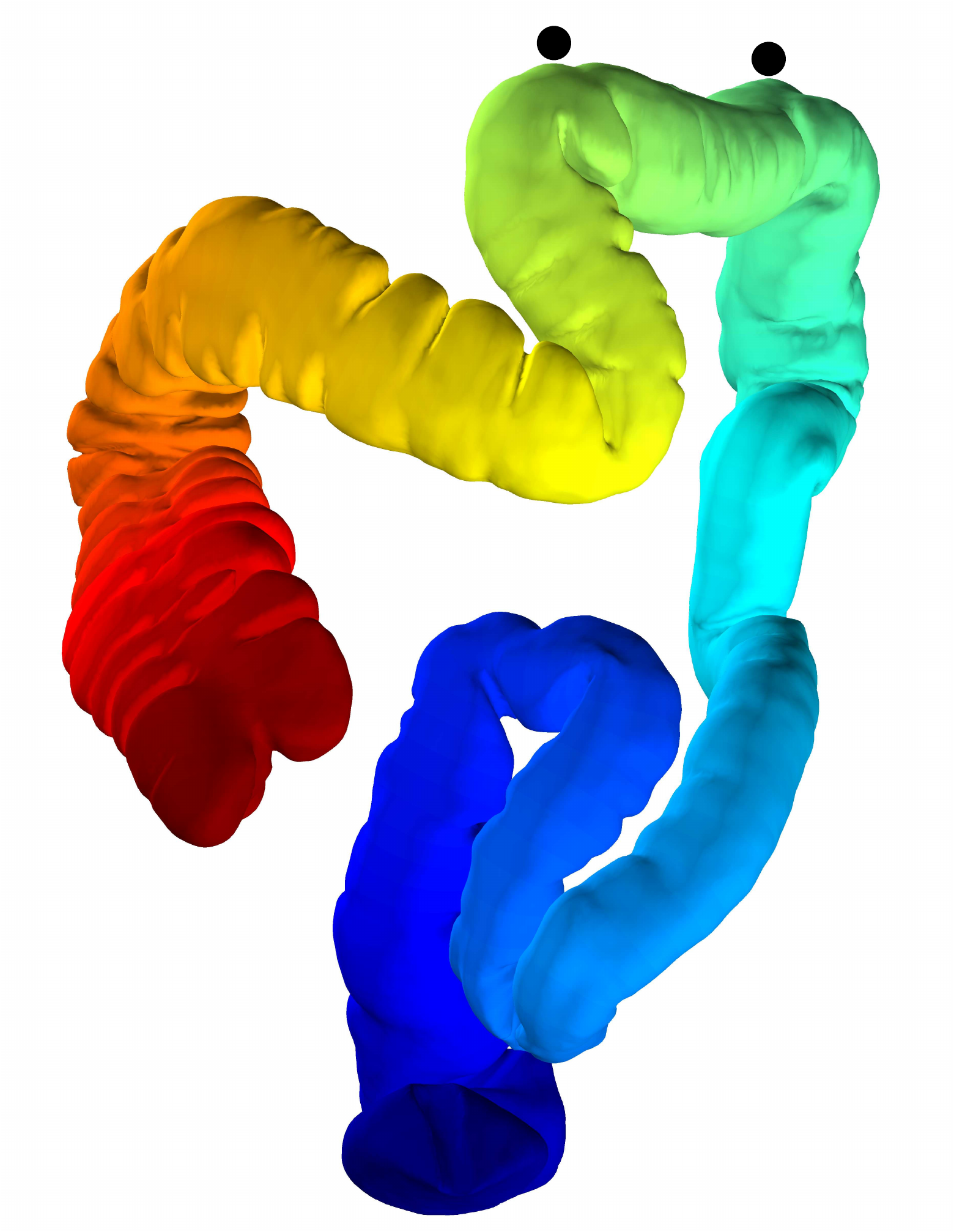}\\
(a) Supine & (b) Prone\\
\includegraphics[width=0.15\textwidth]{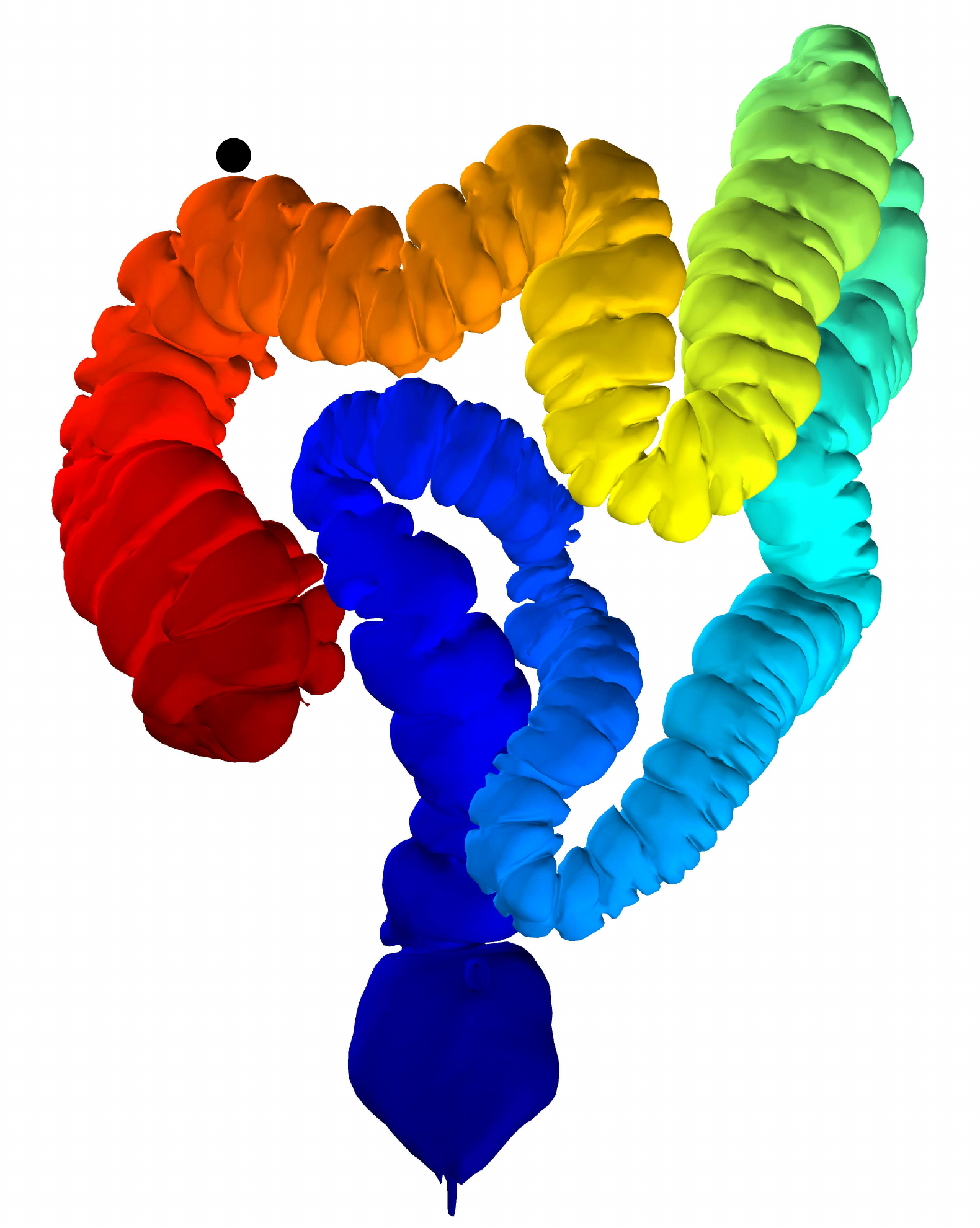}&
\includegraphics[width=0.17\textwidth]{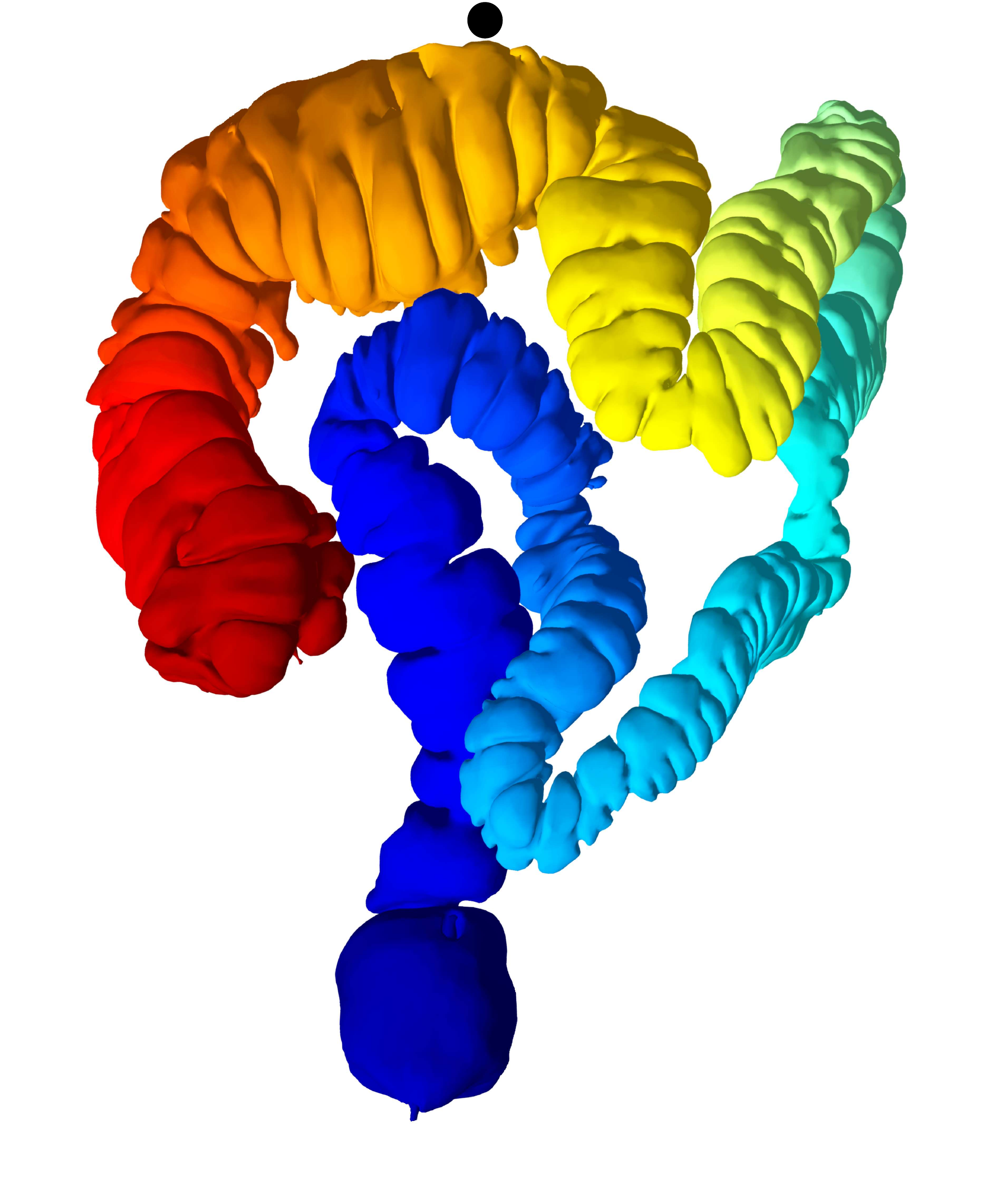}\\
(c) Supine & (d) Prone\\
\includegraphics[width=0.18\textwidth]{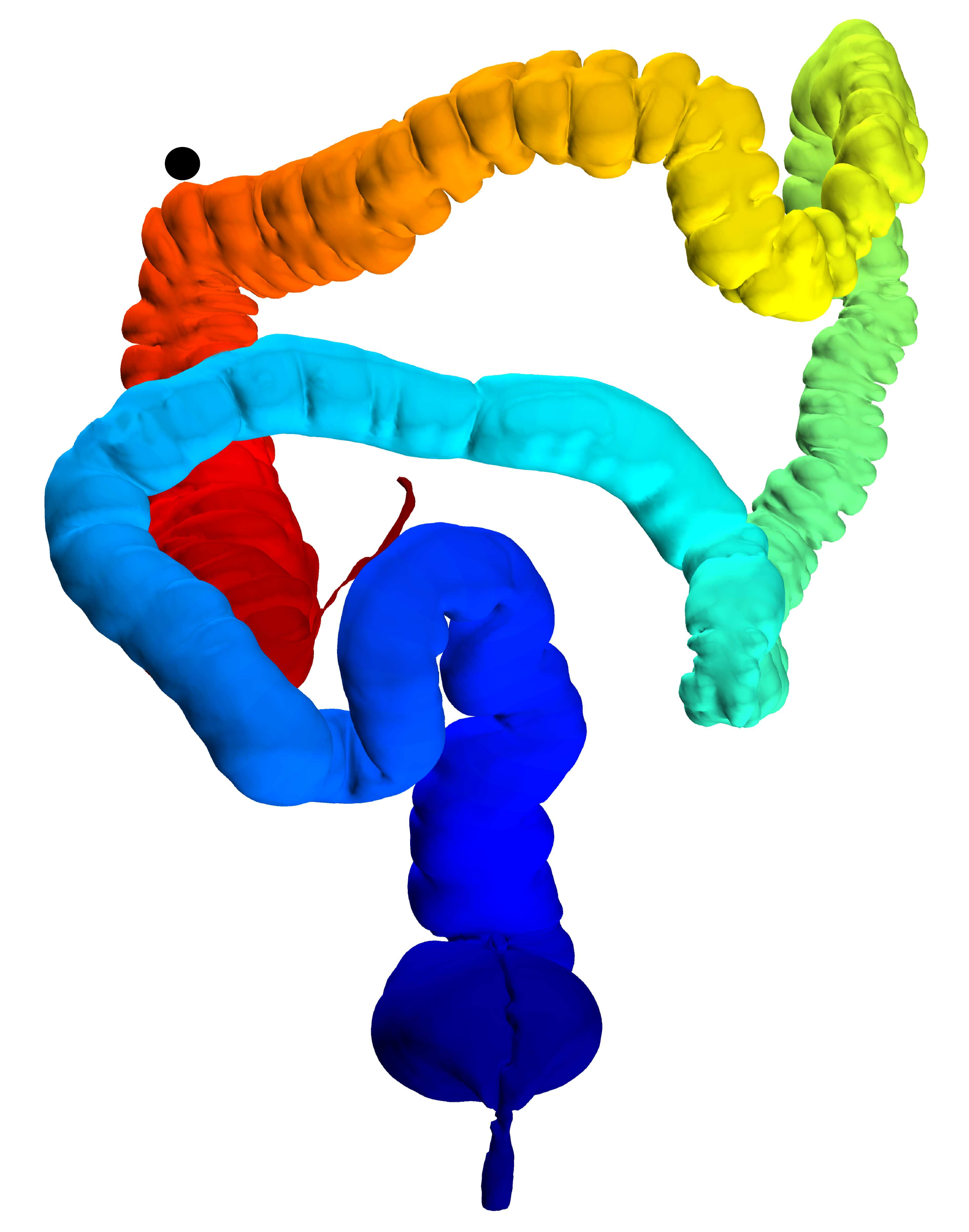}&
\includegraphics[width=0.15\textwidth]{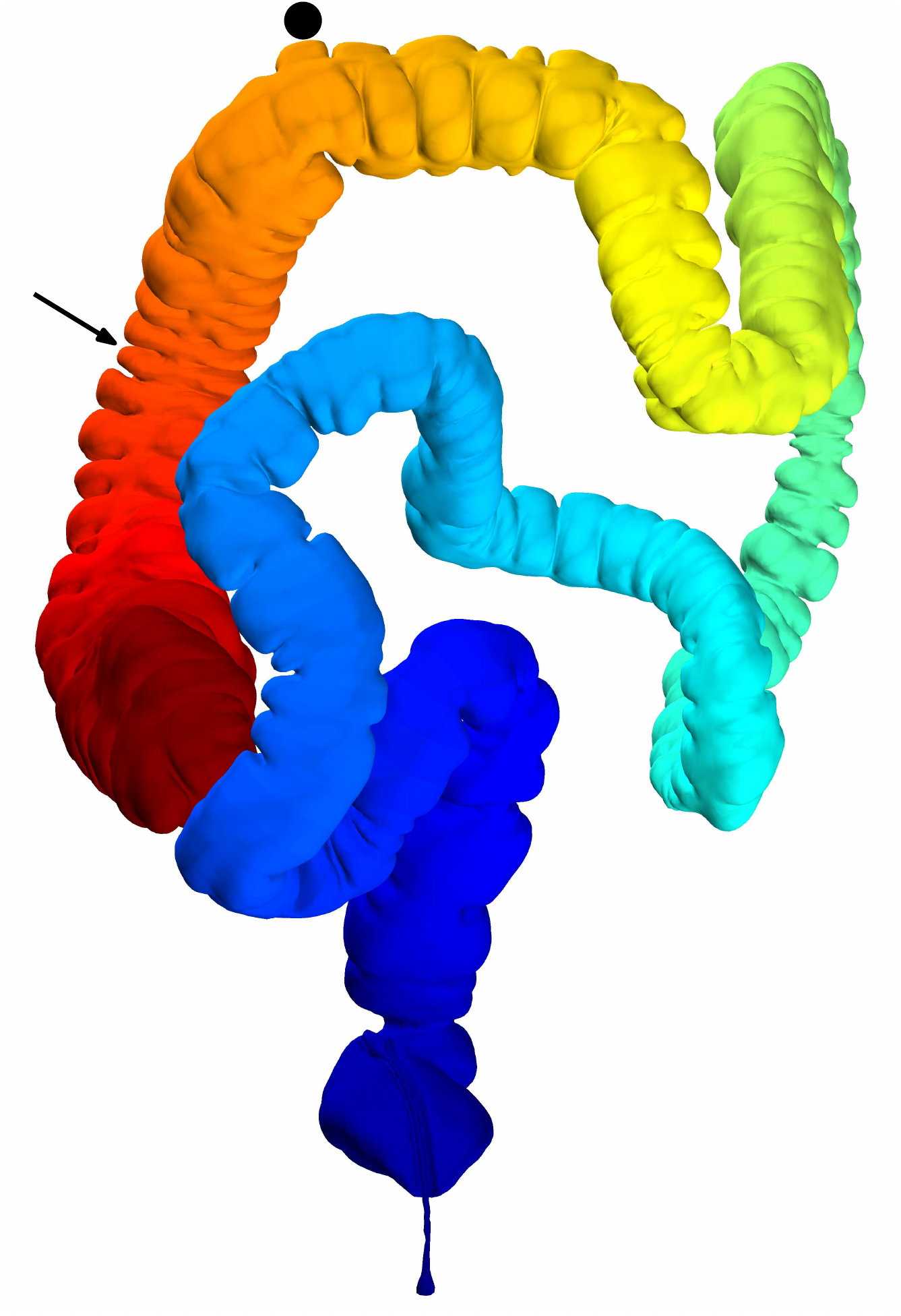}\\
(e) Supine & (f) Side\\
\end{tabular}
\end{center}
\vspace{-5mm}
\caption{\emph{Registration problems with the Lai et al. \cite{lai:2010:intra} algorithm}. Due to the dependence of the Lai et al. registration on correctly detecting flexures, the average distance registration error increases with any $\varepsilon > 0$ if the flexures are not detected correctly on the corresponding colon pairs. The black circles indicate the local maximum z-coordinates as per their approach, and we use $\varepsilon=0.05$ to compute the corresponding points on the prone/side model. In (b), two local maximum z-coordinates are detected, instead of one (which is the Lai et al. assumption). In (f) the arrow shows the corresponding actual location on the side colon for the point detected in the supine model (e).
\label{fig:datasets}}
\end{figure}


\newpage
\section*{Appendix}
\subsection*{A.1. Previous Fiedler vector colon registration}
We have compared our work against a similar method using Fiedler vector introduced by Lai et al. \cite{lai:2010:intra}. Lai et al.'s automatic registration algorithm is dependent on the detection of two landmarks, namely the splenic and hepatic flexures, once the Fiedler vector has been computed on both the supine and prone models. Theoretically, the hepatic flexure forms the topmost point of the ascending colon and the splenic flexure forms the topmost point of the descending colon; this is the assumption of the Lai et al. algorithm as well. However, on real patient colon datasets that we tested, splenic and hepatic flexure detection is a much more complex problem and cannot simply be characterized by the local maximum z-coordinate near one Fiedler vector extrema or the other (as described in Lai et al.'s paper) due to the inflection points and the large variation in the troughs and ridges of the folds at the top. In general, local maximum z-coordinates can span several folds at the top and hence, the flexures detected within an epsilon value (using Lai et al.'s published approach) on the supine and prone colon models can vary considerably.

Due to this issue on some datasets when using Lai et al.'s registration algorithm, rather than improving the results (over our global registration baseline, which is essentially the registration with epsilon value of 0 in their case) once the landmarks (splenic and hepatic flexures) are detected, the results deteriorated in six out of our ten colon pair datasets (with an average registration distance error of 14.19 mm) due to the incorrect detection of the flexures on the corresponding colon pair (supine/prone/side) datasets. The average registration distance error for our complete algorithm, including the fold matching, was 5.24 mm for the same datasets.

Some of the registration problems that we encountered with the Lai et al. algorithm are highlighted in Figure~\ref{fig:datasets} with three of our datasets. In Figure~\ref{fig:datasets}(b) for example, two local maxima are detected on the prone model within the $\varepsilon=0.05$ value. This case is not handled based on the published Lai et al. approach. To compute the results, we picked the one closer to the maximum Fiedler vector value (as done by the Lai et al. approach for the supine but not for the prone model).

Moreover, the details of Lai et al.'s published algorithm do not deal with the flipping of the Fiedler vector between the supine and prone colon pairs. If the Fiedler vector values are flipped, this can lead to the cecum-hepatic segment being registered with the rectum-splenic segment on the corresponding model, which gives an erroneous result. To avoid the flipping in our case, we manually make the Fiedler vector minimum and maximum values consistent between the colon pairs.

\subsection*{A.2. Previous fold segmentation using fuzzy C-means}
We provide a visual comparison of the detection accuracy of our algorithm with a previous fold detection algorithm using heat diffusion and fuzzy C-means clustering \cite{chowdhury:2010:colonic} in Figure  \ref{fig:fold_detection_eval}. As can be observed from a visual inspection of the results, our method succeeds in segmenting the individual folds, whereas the fuzzy C-means clustering often results in multiple folds being segmented together as a single fold.

\end{document}